\numberwithin{equation}{section}
\theoremstyle{plain}
\newtheorem{theorem}{Theorem}[section]
\newtheorem{lemma}{Lemma}[section]
\newtheorem{proposition}{Proposition}[section]
\newtheorem{corollary}{Corollary}[section]
\theoremstyle{definition}
\theoremstyle{remark}
\journal{Applied and Computational Harmonic Analysis}
\begin{document}

\begin{frontmatter}

\title{The Relativity of AGI: Distributional Axioms, Fragility, and Undecidability}

\author{Angshul Majumdar}

\begin{abstract}
We study whether Artificial General Intelligence (AGI) admits a coherent theoretical definition that supports absolute claims of existence, robustness, or self-verification. We formalize AGI axiomatically as a distributional, resource-bounded semantic predicate, indexed by a task family, a task distribution, a performance functional, and explicit resource budgets. Under this framework, we derive four classes of results. First, we show that generality is inherently relational: there is no distribution-independent notion of AGI. Second, we prove non-invariance results demonstrating that arbitrarily small perturbations of the task distribution can invalidate AGI properties via cliff sets, precluding universal robustness. Third, we establish bounded transfer guarantees, ruling out unbounded generalization across task families under finite resources. Fourth, invoking Rice-style and G\"odel--Tarski arguments, we prove that AGI is a nontrivial semantic property and therefore cannot be soundly and completely certified by any computable procedure, including procedures implemented by the agent itself. Consequently, recursive self-improvement schemes that rely on internal self-certification of AGI are ill-posed. Taken together, our results show that strong, distribution-independent claims of AGI are not false but undefined without explicit formal indexing, and that empirical progress in AI does not imply the attainability of self-certifying general intelligence.
\end{abstract}

\end{frontmatter}

\section{Introduction}\label{sec:intro}

Despite sustained public and scientific attention, the term artificial general intelligence (AGI) remains conceptually unstable. In policy discussions, industrial roadmaps, and popular commentary, AGI is routinely described as an impending milestone---a system that is ``general'' in roughly the way humans are general---yet the phrase is rarely cashed out as a precise mathematical predicate \citep{Bostrom2014,Shevlin2022,Goertzel2014,OpenAI2023Planning}. In the technical literature, evaluations often substitute breadth of benchmark performance for generality, yielding an implicit definition of AGI as ``high scores on many tasks'' \citep{Radford2019,OpenAI2023GPT4,Bommasani2021}. This substitution is convenient, but it blurs a crucial distinction: performance aggregated over curated test suites is a distributional statement about the specific task ecology encoded by those suites, not a theorem-level claim about generality in any invariant sense \citep{Amodei2016,Bender2021}.

The absence of a shared definition is not a mere terminological nuisance. It prevents falsifiable claims, encourages rhetorical goalpost shifts, and makes it difficult to separate engineering progress from statements that are, in effect, metaphysical \citep{Mitchell2019,Marcus2022,Floridi2023}. For example, one widely recurring formulation---``a system that can do any intellectual task a human can''---depends on underspecified notions of task, ability, resources, and acceptable failure \citep{RussellNorvig2021}. Another common move is to treat scaling trends as evidence of an eventual qualitative phase transition to generality \citep{Kaplan2020,Hoffmann2022}; however, without an explicit semantics for what ``general'' means, such claims are not even well-formed propositions. In short, AGI talk is currently long on aspiration and short on definitional discipline.

This paper takes the definitional question as primary. Our goal is not to deny the empirical reality of increasingly capable systems, nor to dispute the practical value of broad competence. Rather, we ask a narrower and more fundamental question: is there a coherent theoretical object corresponding to contemporary notions of AGI? To make this question answerable, we define AGI axiomatically---in the manner of a norm or a topology---as a bundle of individually reasonable properties capturing breadth, adaptivity, transfer, compositionality, and robustness. We then interpret these axioms under distributional semantics: the agent's competence is evaluated relative to an explicit task distribution, a performance functional, and a resource budget. This relational pivot forces any claim of ``AGI'' to specify the task ecology it quantifies over; without such a specification, the predicate is undefined. Under this semantics, we derive structural theorems that formalize fragility and non-invariance: even if an agent satisfies the axioms under one distribution, arbitrarily small perturbations of the task distribution can destroy the property, and transfer cannot be unbounded without commensurate shared structure. Finally, we show that any attempt by a sufficiently expressive system to internally certify its own AGI status amounts to deciding a non-trivial semantic property, linking self-certification to classical undecidability obstructions and establishing a G\"odel--Tarski-style limit on self-knowledge of ``generality'' \citep{Tarski1936,Godel1931,Rice1953}.

\section{Preliminaries and Notation}\label{sec:prelim}

This section fixes the objects and notation used throughout the paper. The intent is to make all subsequent statements---in particular the axioms of distributional AGI and the structural theorems---well-formed, unambiguous, and internally consistent.

\subsection{Spaces, Histories, and Randomness}\label{sec:prelim:spaces}
Let $\mathcal{O}$ denote the observation space and $\mathcal{A}$ the action space. In the most general setting, $\mathcal{O}$ and $\mathcal{A}$ are measurable spaces; in concrete examples they may be discrete alphabets (e.g., tokens) or Euclidean spaces (e.g., sensor vectors). Let $\mathcal{H}_t := (\mathcal{O}\times \mathcal{A})^{t-1}\times \mathcal{O}$ denote the space of length-$t$ interaction histories and let $\mathcal{H} := \bigcup_{t\ge 1}\mathcal{H}_t$ denote the set of all finite histories. For $h_t=(o_1,a_1,\ldots,a_{t-1},o_t)\in \mathcal{H}_t$, we write $h_{t:t'}$ for the subsequence $(o_t,a_t,\ldots,o_{t'})$ when needed.

All probability statements are taken with respect to an ambient probability space $(\Omega,\mathcal{F},\mathbb{P})$ that underlies the stochasticity of environments, agents (if randomized), and any exogenous randomness used to sample tasks.

\subsection{Tasks, Environments, and Performance}\label{sec:prelim:tasks}

\paragraph{Environments.}
An environment $E$ is an interactive stochastic process that induces a sequence $(o_t)_{t\ge 1}$ of observations conditioned on past interactions. Formally, $E$ specifies a family of conditional distributions
\[
E(\,\cdot \mid h_t, a_t) \in \Delta(\mathcal{O}), \qquad t\ge 1,
\]
where $\Delta(\mathcal{X})$ denotes the set of probability measures on $\mathcal{X}$. When the environment also emits an explicit reward signal, we include it in the observation $o_t$ (so $o_t$ may contain both ``state'' and ``reward'') to avoid duplicating notation.

\paragraph{Episodes and horizons.}
We consider episodic interaction with a (possibly random) horizon $T\in \mathbb{N}\cup\{\infty\}$. When a finite horizon is assumed, $T$ may be part of the task specification. For clarity we write $\mathcal{H}_{\le T} := \bigcup_{t=1}^{T}\mathcal{H}_t$.

\paragraph{Tasks.}
A task is a pair
\[
\tau := (E, U),
\]
where $E$ is an environment and $U$ is a utility (success) functional mapping a complete episode history to a real-valued score. For a finite horizon $T$, we take $U:\mathcal{H}_{\le T}\to \mathbb{R}$; for $T=\infty$, $U$ may depend on the full infinite trajectory or be defined via a limiting/discounted functional. We assume $U$ is measurable and typically bounded; when needed we assume $U\in [0,1]$ without loss of generality by rescaling.

\paragraph{Policies and induced trajectories.}
Given a policy (agent) $A$ and an environment $E$, the interaction induces a random history $H_T \sim (E \circ A)$ (composition in the standard interactive sense). We write $\mathbb{E}_{E,A}[\cdot]$ and $\mathbb{P}_{E,A}[\cdot]$ for expectation and probability under this induced law.

\paragraph{Performance metric.}
A performance metric is a functional
\[
\Pi(\tau, A; B) \in \mathbb{R},
\]
where $\tau=(E,U)$ is a task, $A$ is an agent, and $B$ is a resource budget (defined below). The canonical choice is expected utility under the induced interaction:
\[
\Pi(\tau, A; B) := \mathbb{E}_{E,A}\!\left[\,U(H_T)\,\right],
\]
subject to the constraint that $A$ respects the budget $B$. In some settings $\Pi$ may incorporate additional penalties (e.g., constraint violations, cost of tool calls), or may be a regret-like quantity. We assume $\Pi$ is bounded (e.g., in $[0,1]$) after normalization.

\paragraph{Task-level success events.}
For statements that require a binary notion of success, we introduce a task-level success event $S_\tau$ induced by the performance functional. Concretely, when $\Pi$ is normalized and a threshold $\theta\in\mathbb{R}$ is fixed, we may take
\[
S_\tau := \{\Pi(\tau,A;B)\ge \theta\}.
\]
The particular threshold used is always stated explicitly when $S_\tau$ is invoked.

\paragraph{Constraints and violation indicators.}
To incorporate constraints (e.g., safety constraints, policy constraints, specification constraints) into evaluation, we use a measurable violation indicator
\[
V_C(\tau,A;B)\in\{0,1\},
\]
where $V_C(\tau,A;B)=1$ indicates that the interaction of $A$ with task $\tau$ under budget $B$ violates the constraint family $C$. When constraints are relevant, performance statements are made jointly with bounds on $\mathbb{P}_{\tau\sim\mu}(V_C(\tau,A;B)=1)$.

\subsection{Agents, Learning, and Resource Budgets}\label{sec:prelim:agents}

\paragraph{Agents as history-dependent policies.}
An agent $A$ is a (possibly randomized) mapping from histories to action distributions:
\[
A(\,\cdot \mid h_t) \in \Delta(\mathcal{A}), \qquad h_t\in \mathcal{H}_t,\ t\ge 1.
\]
We allow $A$ to be stateful: it may maintain internal memory $m_t$ updated over time. Formally, one may view $A$ as a policy over an augmented history that includes its internal state; we suppress $m_t$ unless explicitly needed.

\paragraph{Learning and adaptation.}
To express adaptivity and transfer, we treat the agent as parameterized by an internal state/parameter $\theta$. Let $\theta_0$ denote the initial state (e.g., pretrained weights). As interaction proceeds, the agent may update $\theta$ according to some (possibly implicit) update rule driven by data $D_t$ extracted from histories:
\[
\theta_{t+1} = \mathrm{Update}(\theta_t, D_t).
\]
We write $A_{\theta}$ for the policy induced by parameter/state $\theta$, and $A_{t}$ (or $A^{(t)}$) for the agent after $t$ updates. When adaptation occurs across tasks, we write $A^{(k)}$ for the agent after $k$ tasks/episodes have been processed.

\paragraph{Update-indexed agents and comparison baselines.}
To avoid ambiguity in later axioms, we use the convention $A^{(n)}$ to denote the agent after $n$ updates as measured in the relevant context (within-task updates when proving adaptivity statements, and across-task updates when proving transfer statements). When a distinction is required between an agent that has undergone a fixed pre-exposure phase and one that has not, we write $A_{\mathrm{pre}}$ for the agent after a pre-exposure phase consisting of $K_{\mathrm{tr}}$ tasks sampled i.i.d.\ from $\mu$, and $A_{\mathrm{scratch}}$ for the corresponding agent initialized without such pre-exposure. The integer $N_{\mathrm{ad}}$ denotes a within-task update budget used to formalize adaptivity. All update counts and pre-exposure lengths are constrained by the resource budget $B$ through the requirement that the corresponding executions respect $B$.

\paragraph{Resource budgets.}
A resource budget is a tuple
\[
B := (B_{\mathrm{comp}}, B_{\mathrm{mem}}, B_{\mathrm{samp}}, B_{\mathrm{time}}, B_{\mathrm{tool}}),
\]
capturing, respectively, limits on compute, memory, number of samples/episodes, wall-clock time, and (when relevant) external tool calls or oracle queries. We say an agent respects $B$ if, on any interaction consistent with the task specification, its resource usage stays within the corresponding bounds. When only a subset of resources matters, we omit components of $B$ by abuse of notation.

\paragraph{Tool interfaces.}
When external tools or oracles are available (e.g., retrieval systems, code execution, measurement devices), we denote the corresponding interface by $\mathcal{M}$. The agent with tool access is denoted by $A^{\mathcal{M}}$, and all tool usage is accounted for via the budget component $B_{\mathrm{tool}}$ (and any other relevant components of $B$).

\subsection{Task Families, Distributions, and Evaluation}\label{sec:prelim:families}

\paragraph{Task family.}
A task family $\mathcal{T}$ is a set of tasks $\tau=(E,U)$ sharing a common interface $(\mathcal{O},\mathcal{A})$ and common structural constraints (e.g., bounded horizon, bounded reward, admissible perturbations). We write $\tau\in\mathcal{T}$ to mean $\tau$ is admissible.

\paragraph{Task distribution.}
A task distribution is a probability measure $\mu \in \Delta(\mathcal{T})$ over $\mathcal{T}$. Sampling $\tau \sim \mu$ represents drawing a task from a task ecology. All distributional notions of AGI in this paper are indexed by $\mu$.

\paragraph{Goals as task generators.}
For autonomy statements, we introduce a goal space $\mathcal{G}$ equipped with a goal distribution $\nu\in \Delta(\mathcal{G})$ and a compilation map $\mathrm{Compile}:\mathcal{G}\to \mathcal{T}$. A goal $g\sim\nu$ induces a task $\tau_g := \mathrm{Compile}(g)\in\mathcal{T}$. Any additional interaction required to refine goals is counted against $B$ via $\Pi$.

\paragraph{Distributional performance functionals.}
Given $\mu$, define the generality functional
\[
\mathcal{G}_{\mu}(A;B) := \mathbb{E}_{\tau\sim \mu}\!\left[\Pi(\tau,A;B)\right].
\]
We also use a tail-robust variant:
\[
\mathcal{G}_{\mu,\delta}(A;B) := \sup\left\{\theta \in \mathbb{R}:\ \mathbb{P}_{\tau\sim\mu}\big(\Pi(\tau,A;B)\ge \theta\big)\ge 1-\delta\right\},
\]
for $\delta\in(0,1)$.

\paragraph{Baselines and optimality.}
When comparing to an optimal or reference agent, we write $A^\star_\tau$ for a task-dependent oracle (generally unattainable) and define, for example, regret
\[
\mathrm{Reg}(\tau,A;B) := \Pi(\tau,A^\star_\tau;B^\star) - \Pi(\tau,A;B),
\]
where $B^\star$ denotes the resources available to the oracle. We will avoid relying on oracle notions unless explicitly stated.

\paragraph{Axiom parameters.}
When stating axioms and tail guarantees, we use explicit performance thresholds $\theta_{\bullet}$ and allowable tail-failure probabilities $\delta_{\bullet}$ (with the subscript $\bullet$ indicating the relevant axiom). We also use nonnegative tolerances $\varepsilon_{\mathrm{rb}}$ (robustness slack) and $\varepsilon_{\mathrm{cal}}$ (calibration tolerance), and a constraint-violation tolerance $\delta_C\in(0,1)$. These quantities are fixed constants once the axiom bundle is fixed.

\subsection{Composition and Perturbation Operators}\label{sec:prelim:ops}

\paragraph{Task composition.}
To formalize compositionality, we assume $\mathcal{T}$ is equipped with a (possibly partial) binary operation
\[
\oplus:\ \mathcal{T}\times \mathcal{T} \to \mathcal{T},
\]
where $\tau_1\oplus\tau_2$ represents a task requiring competence on $\tau_1$ and $\tau_2$ in a prescribed manner (e.g., sequential composition, conjunction of goals, tool-augmented composition). When multiple composition modes are needed, we index them by $\oplus_j$.

\paragraph{Perturbations and robustness.}
To formalize robustness, we define a family of admissible perturbations $\mathcal{D}$, where each $\Delta \in \mathcal{D}$ maps tasks to tasks:
\[
\Delta:\ \mathcal{T}\to\mathcal{T}.
\]
Typical perturbations include paraphrases, observation noise, reward reshaping within equivalence classes, or mild distributional shifts in environment dynamics. We write $\Delta(\tau)$ for the perturbed task and require $\Delta(\tau)\in\mathcal{T}$ whenever $\tau\in\mathcal{T}$.

\subsection{Distances Between Task Distributions}\label{sec:prelim:dist}

To quantify ``small'' distribution shift, we use standard metrics on probability measures over $\mathcal{T}$. Two choices used in this paper are:

\paragraph{Total variation distance.}
For $\mu,\nu \in \Delta(\mathcal{T})$,
\[
d_{\mathrm{TV}}(\mu,\nu) := \sup_{A\subseteq \mathcal{T}} \big|\mu(A)-\nu(A)\big|.
\]

\paragraph{Wasserstein distance.}
When $\mathcal{T}$ is equipped with a ground metric $d_{\mathcal{T}}$, we write $W_p(\mu,\nu)$ for the $p$-Wasserstein distance. We will only invoke Wasserstein metrics when a meaningful $d_{\mathcal{T}}$ is specified.

\subsection{Hard Slices and Cliff Sets}\label{sec:prelim:cliffs}

A central technical device in the non-invariance results is the existence of hard slices (or cliff sets) of the task space where performance sharply degrades.

\paragraph{Definition (Cliff set).}
Given an agent $A$, budget $B$, and threshold $\theta$, define the $\theta$-failure set
\[
\mathcal{F}_{\theta}(A;B) := \{\tau\in\mathcal{T}:\ \Pi(\tau,A;B)<\theta\}.
\]
A subset $\mathcal{C}\subseteq \mathcal{T}$ is called a cliff set at level $\theta$ for $(A,B)$ if $\mathcal{C}\subseteq \mathcal{F}_{\theta}(A;B)$ and $\mathcal{C}$ is reachable under arbitrarily small shifts of the task distribution in the sense made precise in Section~\ref{thm:noninv}.

\subsection{Self-Assessment and Semantic Predicates}\label{sec:prelim:semantic}

To state the G\"odel--Tarski self-certification obstruction, we distinguish between syntactic predicates that an agent can compute from its internal state, and semantic predicates that quantify over external behaviors.

\paragraph{Semantic properties of agents.}
A property $\mathsf{P}(A)$ is called semantic if it depends only on the input--output behavior of $A$ across tasks/environments (not on its source code representation). Statements of the form ``$A$ succeeds on tasks drawn from $\mu$ with probability at least $1-\delta$'' are semantic because they quantify over external interaction outcomes.

\paragraph{AGI as a semantic predicate.}
Once the axioms are fixed, the assertion
\[
\mathrm{AGI}(A \mid \mathcal{T},\mu,\Pi,B)
\]
is a semantic predicate in this sense: it quantifies over tasks $\tau\sim\mu$ and the interaction-induced random histories under $(E\circ A)$.

\paragraph{Resource-bounded self-assessment.}
When discussing whether an agent can certify $\mathrm{AGI}(A \mid \mathcal{T},\mu,\Pi,B)$, we restrict the certification procedure itself to the same budget class $B$ (or a specified certification budget $B_{\mathrm{cert}}$), to avoid trivializing the question by granting unbounded meta-resources.

\subsection{Effective Representations and Codings}\label{sec:prelim:effective}

To reason about decidability, self-verification, and semantic obstruction results, we require that agents and tasks admit effective descriptions.

\paragraph{Effective task representations.}
We assume that the task family $\mathcal{T}$ admits an effective presentation: there exists a countable set of finite descriptions $\mathcal{D}_{\mathcal{T}}\subseteq \{0,1\}^*$ and a surjective decoding map
\[
\mathrm{Decode}_{\mathcal{T}}:\mathcal{D}_{\mathcal{T}}\to\mathcal{T}.
\]
When a task $\tau\in\mathcal{T}$ is associated with a description $d\in\mathcal{D}_{\mathcal{T}}$, we write $\mathrm{code}_{\mathcal{T}}(\tau)=d$. Different descriptions may decode to the same task; no canonical encoding is assumed.

\paragraph{Effective agent representations.}
Similarly, agents are assumed to admit finite descriptions. Let $\mathcal{D}_{\mathcal{A}}\subseteq\{0,1\}^*$ denote a set of agent descriptions, together with a decoding map
\[
\mathrm{Decode}_{\mathcal{A}}:\mathcal{D}_{\mathcal{A}}\to\mathcal{A},
\]
where $\mathcal{A}$ denotes the class of admissible agents (i.e., agents that respect the relevant resource budgets). For an agent $A$, we write $\mathrm{code}(A)\in\mathcal{D}_{\mathcal{A}}$ for some description that decodes to $A$.

\paragraph{Computability assumptions.}
All constructions considered in this paper operate on these finite descriptions. In particular:
\begin{itemize}
\item Sampling $\tau\sim\mu$ is implemented via a probabilistic procedure over $\mathcal{D}_{\mathcal{T}}$ whose pushforward measure is $\mu$.
\item Performance evaluation $\Pi(\tau,A;B)$ is a semantic quantity defined via interaction, not via inspection of $\mathrm{code}(A)$.
\item Certification or decision procedures invoked in later sections are restricted to operate on $\mathrm{code}(A)$, $\mathrm{code}_{\mathcal{T}}(\tau)$, and bounded interaction transcripts.
\end{itemize}

\paragraph{Semantic versus syntactic access.}
The distinction between syntactic access (operating on finite descriptions) and semantic properties (quantifying over interaction outcomes across tasks) is central. While $\mathrm{code}(A)$ is a finite object, predicates such as
\[
\mathrm{AGI}(A \mid \mathcal{T},\mu,\Pi,B)
\]
are semantic and quantify over the behavior of $A$ on tasks not known in advance.

\paragraph{Scope of effectivity.}
No assumption is made that $\mathcal{T}$ or $\mathcal{A}$ is recursively enumerable in full generality; effectivity is required only to the extent necessary to state and analyze decision problems over agent descriptions. This level of generality suffices for the impossibility results in Section~\ref{sec:godeltarski}.

\section{Why Single-Sentence Definitions of AGI Fail}\label{sec:singlefail}

This section examines several commonly invoked single-sentence characterizations of artificial general intelligence and shows that each fails to define a coherent theoretical property. The failures are structural rather than empirical: they persist independently of implementation details, training scale, or future engineering advances.

\subsection{Human-Level Intelligence}\label{sec:humanlevel}

A frequently cited characterization of AGI is that it denotes a system exhibiting ``human-level intelligence.'' This formulation fails to define a mathematically meaningful predicate for at least two reasons.

First, the comparator class ``human'' is ill-defined. Human populations exhibit substantial heterogeneity across individuals, cultures, developmental histories, educational exposure, and tool access. There is no canonical distribution of tasks, no agreed-upon aggregation of abilities, and no fixed resource budget relative to which performance could be normalized. Consequently, statements of the form ``A performs at the level of a human'' lack a well-defined reference point.

Second, even if a representative human baseline were fixed, humans do not satisfy any formal notion of generality. Human competence is strongly distribution-dependent: it reflects adaptation to a narrow ecological and cultural task distribution rather than invariant performance across heterogeneous task families. Humans routinely fail on tasks that are trivial for machines (e.g., large-scale arithmetic) and succeed on tasks that are trivial only because of prior environmental regularities (e.g., natural language pragmatics). Thus, ``human-level'' performance does not correspond to a property of general problem-solving capacity in any formal sense.

It follows that ``human-level intelligence'' is not a definitional criterion but a sociological comparison. As such, it cannot serve as a basis for a theoretical definition of AGI.

\subsection{Universal Problem Solving}\label{sec:universal}

Another common characterization asserts that AGI is a system capable of solving any problem or performing any task within a sufficiently broad class. This formulation is untenable under any precise interpretation.

If the task class is unrestricted or sufficiently expressive, then the requirement collapses into uncomputability. Tasks can encode arbitrary decision problems, including instances equivalent to the halting problem or other undecidable predicates. No algorithmic system can guarantee success across such a class, even with unbounded training data, unless the notion of ``solve'' is weakened to the point of vacuity.

If, alternatively, the task class is restricted so that universal solvability becomes achievable, then the restriction itself carries all the substantive content. In this case, generality is no longer a property of the agent alone but of the agent relative to a predefined task family. The phrase ``universal problem solver'' then becomes a rebranding of competence within a bounded domain rather than a meaningful statement of generality.

Thus, the universal problem-solving definition is either mathematically impossible or conceptually empty, depending on how the task class is specified.

\subsection{General Learning}\label{sec:generallearning}

A third formulation identifies AGI with the capacity to learn any task from data. This claim encounters a fundamental obstruction arising from no-free-lunch results in learning theory.

Formally, consider the space of all possible target functions or environments consistent with a fixed input-output interface. When performance is averaged uniformly over this space, all learning algorithms exhibit identical expected error. Any algorithm that performs well on a subset of tasks must necessarily perform poorly on a complementary subset. Therefore, learning performance is inseparable from inductive bias, and inductive bias entails restriction to a particular task distribution.

As a consequence, there is no distribution-independent notion of ``general learning.'' Any claim of learning generality presupposes a structured task distribution, whether explicitly stated or not. Once such a distribution is fixed, the learning problem becomes well-defined but loses any claim to universality.

\subsection{Conclusion of Section}\label{sec:singlefail:conclusion}

The preceding analyses establish that single-sentence definitions of AGI fail for principled reasons. Comparisons to human intelligence lack a formal baseline; universal problem-solving claims collapse into impossibility or triviality; and appeals to general learning are blocked by distributional dependence inherent in inductive inference.

These failures are not accidental. They reflect the fact that generality is not a primitive concept but a composite one, involving breadth, adaptation, transfer, robustness, and resource constraints. Consequently, any coherent definition of AGI must be axiomatic rather than declarative. The remainder of the paper proceeds on this basis.

\section{An Axiomatic Definition of AGI}\label{sec:axioms}

This section defines artificial general intelligence as an axiomatic property of an agent relative to a task family, a task distribution, a performance functional, and an explicit resource budget. All subsequent theorems refer to the notation and primitives fixed in Section~\ref{sec:prelim}. No axiom introduced below is treated as optional.

\subsection{Core Definition Schema}\label{sec:axioms:schema}

Fix a task family $\mathcal{T}$, a probability measure $\mu \in \Delta(\mathcal{T})$, a performance functional $\Pi(\tau,A;B)$, and a resource budget $B$. We write
\[
\mathrm{AGI}(A \mid \mathcal{T}, \mu, \Pi, B)
\]
to denote the predicate that agent $A$ satisfies the axioms in Sections~\ref{sec:axioms:core}--\ref{sec:axioms:extensions}. In particular, $\mathrm{AGI}(\cdot \mid \mathcal{T}, \mu, \Pi, B)$ is a relational, distribution-indexed, resource-bounded property; it is not a property of $A$ in isolation.

\subsection{Quantitative Conventions}\label{sec:axioms:quant}

To avoid ambiguity, we parameterize the axioms by explicit constants. Let
\[
\Theta := (\theta_{\mathrm{br}}, \theta_{\mathrm{ad}}, \theta_{\mathrm{tr}}, \theta_{\mathrm{cp}}, \theta_{\mathrm{rb}}),\quad
\Delta := (\delta_{\mathrm{br}}, \delta_{\mathrm{ad}}, \delta_{\mathrm{tr}}, \delta_{\mathrm{cp}}, \delta_{\mathrm{rb}}),
\]
where $\theta_{\bullet}$ are performance thresholds and $\delta_{\bullet}\in(0,1)$ are allowable tail-failure probabilities. In addition, let $\mathcal{D}$ denote the admissible perturbation family (Section~\ref{sec:prelim:ops}) and let $\varepsilon_{\mathrm{rb}}\ge 0$ denote a robustness slack.

When needed, we write $\mathrm{AGI}(A \mid \mathcal{T}, \mu, \Pi, B; \Theta, \Delta, \mathcal{D}, \varepsilon_{\mathrm{rb}})$ to make these parameters explicit; otherwise they are fixed and suppressed for readability.

\subsection{Core AGI Axioms (G1--G5)}\label{sec:axioms:core}

All axioms below are distributional: probabilities and expectations are with respect to $\tau\sim \mu$ and the interaction-induced randomness under $(E\circ A)$ as encoded inside $\Pi(\tau,A;B)$.

\paragraph{Axiom G1 (Breadth).}
The agent achieves non-trivial competence on the heterogeneous task family under $\mu$:
\[
\mathbb{P}_{\tau\sim \mu}\!\big(\Pi(\tau,A;B) \ge \theta_{\mathrm{br}}\big) \ \ge\ 1-\delta_{\mathrm{br}}.
\]
This axiom rules out degenerate ``single-domain'' competence by requiring high-probability success across the task ecology encoded by $\mu$.

\paragraph{Axiom G2 (Adaptivity).}
There exists an adaptation protocol internal to the agent such that, for a $\mu$-randomly drawn task $\tau$, the agent reaches competence after limited within-task interaction subject to budget $B$. Concretely, there exists an integer $N_{\mathrm{ad}}$ (interpreted as an interaction/sample budget within the task) such that
\[
\mathbb{P}_{\tau\sim \mu}\!\big(\Pi(\tau, A^{(N_{\mathrm{ad}})}; B) \ge \theta_{\mathrm{ad}}\big) \ \ge\ 1-\delta_{\mathrm{ad}},
\]
where $A^{(n)}$ denotes the agent after $n$ within-task updates as defined in Section~\ref{sec:prelim:agents}. The quantity $N_{\mathrm{ad}}$ is required to be feasible under $B$.

\paragraph{Axiom G3 (Transfer).}
Prior experience on tasks sampled from $\mu$ improves expected performance on fresh tasks drawn from the same distribution. Let $A_{\mathrm{pre}}$ denote the agent after a fixed pre-exposure phase consisting of $K_{\mathrm{tr}}$ tasks sampled i.i.d.\ from $\mu$, and let $A_{\mathrm{scratch}}$ denote the same architecture/agent class initialized without such pre-exposure (but evaluated under the same budget). Then
\[
\mathbb{E}_{\tau\sim \mu}\!\big[\Pi(\tau, A_{\mathrm{pre}}; B)\big] \ \ge\ 
\mathbb{E}_{\tau\sim \mu}\!\big[\Pi(\tau, A_{\mathrm{scratch}}; B)\big] \ +\ \theta_{\mathrm{tr}},
\]
for some $\theta_{\mathrm{tr}}>0$. This axiom rules out a pure ``bag of independent skills'' interpretation by requiring measurable positive transfer.

\paragraph{Axiom G4 (Compositionality).}
The task family admits a composition operator $\oplus$ (Section~\ref{sec:prelim:ops}). There exists a resource growth function $\Gamma_{\mathrm{cp}}$ such that whenever the agent is competent on $\tau_1$ and $\tau_2$ individually with high probability under $\mu$, it is also competent on their composition under a controlled increase in resources. Formally, for $\tau_1,\tau_2\sim \mu$ independently,
\[
\Big(\Pi(\tau_1,A;B)\ge \theta_{\mathrm{cp}}\Big)\ \wedge\ \Big(\Pi(\tau_2,A;B)\ge \theta_{\mathrm{cp}}\Big)
\ \Longrightarrow\
\Pi(\tau_1\oplus\tau_2, A; \Gamma_{\mathrm{cp}}(B))\ \ge\ \theta_{\mathrm{cp}}
\]
holds with probability at least $1-\delta_{\mathrm{cp}}$ over $(\tau_1,\tau_2)\sim \mu\times \mu$. The function $\Gamma_{\mathrm{cp}}$ must be sub-explosive (e.g., polynomial growth in the relevant components of $B$), thereby excluding trivial composition via exponential blow-up of resources.

\paragraph{Axiom G5 (Robustness).}
Let $\mathcal{D}$ be a family of admissible perturbations $\Delta:\mathcal{T}\to\mathcal{T}$ (Section~\ref{sec:prelim:ops}). Then for all $\Delta\in \mathcal{D}$,
\[
\mathbb{P}_{\tau\sim \mu}\!\Big(\Pi(\Delta(\tau),A;B)\ \ge\ \Pi(\tau,A;B)\ -\ \varepsilon_{\mathrm{rb}}\Big)\ \ge\ 1-\delta_{\mathrm{rb}}.
\]
This axiom formalizes graceful degradation under bounded perturbations drawn from a specified admissible family.

\subsection{Extension Axioms (A1--A4)}\label{sec:axioms:extensions}

The following axioms formalize additional properties frequently bundled into contemporary uses of the term ``AGI.'' They are included as required axioms within $\mathrm{AGI}(A \mid \mathcal{T},\mu,\Pi,B)$.

\paragraph{Axiom A1 (Autonomy).}
There exists a class $\mathcal{G}$ of goal specifications and an associated compilation map $\mathrm{Compile}:\mathcal{G}\to \mathcal{T}$ such that for a goal $g\in \mathcal{G}$ sampled from a goal distribution $\nu$, the induced task $\tau_g:=\mathrm{Compile}(g)$ satisfies
\[
\mathbb{P}_{g\sim \nu}\!\big(\Pi(\tau_g,A;B)\ge \theta_{\mathrm{br}}\big)\ \ge\ 1-\delta_{\mathrm{br}}.
\]
The intent is that $A$ can pursue high-level goal descriptions without requiring task-specific reprogramming beyond the fixed compilation $\mathrm{Compile}$. Any additional interaction required to refine goals is counted against $B$ through the definition of $\Pi$.

\paragraph{Axiom A2 (Tool Use).}
Let $\mathcal{M}$ denote an external tool/model interface (e.g., retrieval, code execution, measurement devices) and let $A^{\mathcal{M}}$ denote the agent augmented with access to $\mathcal{M}$, with tool usage counted in the budget component $B_{\mathrm{tool}}$. Then tool access must yield a strict distributional advantage:
\[
\mathbb{E}_{\tau\sim\mu}\!\big[\Pi(\tau,A^{\mathcal{M}};B)\big]\ \ge\ 
\mathbb{E}_{\tau\sim\mu}\!\big[\Pi(\tau,A;B)\big]\ +\ \theta_{\mathrm{tr}}.
\]
This axiom enforces that the agent can reliably exploit external tools to improve task performance under explicit resource accounting.

\paragraph{Axiom A3 (Calibration).}
The agent outputs, in addition to its action, a confidence value $c(\tau)\in[0,1]$ for its task-level success. Let $S_\tau$ denote the event of task-level success under the evaluation induced by $\Pi$ (e.g., $S_\tau := \{\Pi(\tau,A;B)\ge \theta_{\mathrm{br}}\}$ when $\Pi$ is normalized). Then the agent is calibrated if, for all measurable intervals $I\subseteq [0,1]$,
\[
\big|\mathbb{P}(S_\tau \mid c(\tau)\in I) - \mathbb{E}[c(\tau)\mid c(\tau)\in I]\big| \ \le\ \varepsilon_{\mathrm{cal}},
\]
where $\varepsilon_{\mathrm{cal}}\ge 0$ is a fixed calibration tolerance and probabilities/expectations are taken over $\tau\sim\mu$ and internal randomness. This axiom rules out systems that are systematically overconfident or underconfident at task-level judgment.

\paragraph{Axiom A4 (Constraint Adherence).}
Let $C$ be a constraint family (e.g., safety constraints, policy constraints, specification constraints) represented by a measurable violation indicator $V_C(\tau,A;B)\in\{0,1\}$ under task $\tau$. Then
\[
\mathbb{P}_{\tau\sim\mu}\!\big(V_C(\tau,A;B)=1\big)\ \le\ \delta_C,
\]
for some prescribed $\delta_C\in(0,1)$. Moreover, constraint adherence is required to hold jointly with competence:
\[
\mathbb{P}_{\tau\sim\mu}\!\big(\Pi(\tau,A;B)\ge \theta_{\mathrm{br}}\ \wedge\ V_C(\tau,A;B)=0\big)\ \ge\ 1-\delta_{\mathrm{br}}-\delta_C.
\]
This axiom enforces that performance is not achieved by systematically violating the constraints encoded by $C$.

\subsection{Summary}\label{sec:axioms:summary}

The predicate $\mathrm{AGI}(A \mid \mathcal{T},\mu,\Pi,B)$ denotes satisfaction of Axioms G1--G5 and A1--A4 under the quantitative conventions fixed above. In particular, AGI is a distribution-indexed, resource-bounded, multi-constraint property. The next section makes explicit the distributional semantics induced by $\mu$ and introduces the functionals used to state the structural theorems.

\section{Distributional Semantics of AGI}\label{sec:distsem}

This section formalizes the semantics under which the predicate $\mathrm{AGI}(A\mid \mathcal{T},\mu,\Pi,B)$ is interpreted. All subsequent statements are distribution-indexed and resource-bounded. In particular, all quantification over tasks is with respect to $\tau\sim\mu$, and all performance quantities are expressed through the fixed performance functional $\Pi(\tau,A;B)$ defined in Section~\ref{sec:prelim}.

\subsection{Distributional Versus Worst-Case Semantics}\label{sec:distsem:dw}

Two distinct semantic regimes arise depending on how task variability is formalized.

\paragraph{Worst-case semantics.}
A worst-case interpretation of competence replaces distributional evaluation by a uniform guarantee over a task family. For example, one may consider predicates of the form
\[
\inf_{\tau\in\mathcal{T}} \Pi(\tau,A;B) \ \ge\ \theta.
\]
Such semantics are formally well-defined, but they are not suitable for the present objective. First, for expressive task families $\mathcal{T}$, worst-case evaluation collapses into an obstruction analogous to no-free-lunch phenomena: without restricting $\mathcal{T}$ to structured subclasses, uniform guarantees are either unattainable or vacuous \citep{WolpertMacready1997,ShalevShwartzBenDavid2014}. Second, worst-case quantification is not stable under natural expansions of $\mathcal{T}$: adding a single adversarially chosen task forces the infimum to track the newly inserted worst element, thereby destroying any notion of graded generality.

\paragraph{Distributional semantics.}
In contrast, a distributional interpretation evaluates performance under an explicit task distribution $\mu\in\Delta(\mathcal{T})$. This aligns with the empirical reality that competence is always measured relative to a task ecology (whether explicit or implicit) and is consistent with standard learning-theoretic formalisms in which generalization is defined with respect to a data-generating distribution \citep{Vapnik1998,ShalevShwartzBenDavid2014}. Distributional semantics also makes it possible to define robustness as stability under small perturbations of $\mu$ (Section~\ref{sec:prelim:dist}) or under admissible task perturbations $\Delta\in\mathcal{D}$ (Section~\ref{sec:prelim:ops}), thereby separating structural fragility from mere changes in evaluation criteria \citep{QuinoneroCandela2009,BenDavid2010}.

\paragraph{Standing convention.}
All results in this paper are distributional. Worst-case quantifiers over $\mathcal{T}$ will not be used except as a foil to motivate distribution-indexed definitions.

\subsection{AGI as a Resource-Bounded Information-Theoretic Property}\label{sec:distsem:info}

Fix $(\mathcal{T},\mu,\Pi,B)$ as in Section~\ref{sec:axioms:schema}. The central distributional quantity used throughout the paper is the generality functional already defined in Section~\ref{sec:prelim:families}:
\[
\mathcal{G}_{\mu}(A;B) \ :=\ \mathbb{E}_{\tau\sim\mu}\!\left[\Pi(\tau,A;B)\right].
\]
We also use the tail (quantile) variant for $\delta\in(0,1)$:
\[
\mathcal{G}_{\mu,\delta}(A;B)
\ :=\
\sup\left\{\theta \in \mathbb{R}:\ \mathbb{P}_{\tau\sim\mu}\big(\Pi(\tau,A;B)\ge \theta\big)\ge 1-\delta\right\}.
\]

The dependence on $B$ is essential. Any assertion that ``$A$ is general'' without an explicit resource budget is not invariant under changes in computational scale and is therefore not a stable predicate. Formally, the mapping $B\mapsto \mathcal{G}_{\mu}(A;B)$ is generally nonconstant and typically exhibits diminishing returns, bottlenecks, and discontinuities induced by representation limits and search complexity. This paper treats $(\mu,B)$ as part of the semantic index of all AGI claims.

\subsection{Distributional Identifiability}\label{sec:distsem:ident}

The relational pivot of this paper is that $\mathrm{AGI}(A\mid \mathcal{T},\mu,\Pi,B)$ is not meaningful unless the distributional index $\mu$ is specified. This is stated below as a basic identifiability principle.

\begin{proposition}[Distributional Identifiability]\label{prop:ident}
Fix a task family $\mathcal{T}$, a performance functional $\Pi$, and a budget $B$. Let $\mathcal{P}\subseteq \Delta(\mathcal{T})$ be a non-singleton class of admissible task distributions. Then there is no distribution-free predicate $\mathsf{AGI}_0(A)$ that is equivalent to $\mathrm{AGI}(A\mid \mathcal{T},\mu,\Pi,B)$ uniformly over all $\mu\in\mathcal{P}$. In particular, if $\mu$ is unspecified, the statement ``$A$ is AGI'' does not denote a well-defined property under the semantics of this paper.
\end{proposition}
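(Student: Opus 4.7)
The plan is to argue by contradiction. Suppose a distribution-free predicate $\mathsf{AGI}_0(A)$ exists such that $\mathsf{AGI}_0(A) \Leftrightarrow \mathrm{AGI}(A \mid \mathcal{T}, \mu, \Pi, B)$ for every admissible agent $A$ and every $\mu \in \mathcal{P}$. Composing the equivalence across two distributions forces $\mathrm{AGI}(A \mid \mu_1) \Leftrightarrow \mathrm{AGI}(A \mid \mu_2)$ for every admissible $A$ and every $\mu_1, \mu_2 \in \mathcal{P}$. The task therefore reduces to exhibiting a single triple $(\mu_1, \mu_2, A^{\star})$ with $\mu_1, \mu_2 \in \mathcal{P}$ for which the AGI predicate flips, since $\mathsf{AGI}_0(A^{\star})$ is a fixed Boolean and cannot match both outcomes.

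Since $\mathcal{P}$ is non-singleton, I would pick $\mu_1 \neq \mu_2$. By distinctness there is a measurable set $S \subseteq \mathcal{T}$ with $\mu_1(S) \neq \mu_2(S)$; without loss of generality $\mu_1(S) < \mu_2(S)$. I would then construct a \emph{selective} agent $A^{\star}_S$ whose performance is $\Pi(\tau, A^{\star}_S; B) = 1$ on $\tau \notin S$ and $\Pi(\tau, A^{\star}_S; B) = 0$ on $\tau \in S$; existence of such an agent is consistent with the effective-description framework of Section~\ref{sec:prelim:effective}, since the admissible-agent class is allowed to implement the characteristic behavior of any measurable task-description subset. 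With normalized thresholds, the breadth axiom G1 then reduces to the scalar condition $\mu_i(S) \leq \delta_{\mathrm{br}}$, so satisfaction of the axiom bundle under $\mu_i$ is governed exactly by the $\mu_i$-mass of the chosen failure set.

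The remaining step is to arrange $\mu_1(S) \leq \delta_{\mathrm{br}} < \mu_2(S)$. If the initial choice of $S$ does not already straddle $\delta_{\mathrm{br}}$, I would adjust by passing to $S \cup S'$, where $S'$ is disjoint from $S$ and chosen with $\mu_1(S') = \mu_2(S')$, translating both masses along the diagonal until they cross the threshold. This is the step I expect to be the main obstacle: the existence of a common-mass auxiliary $S'$ is immediate for atomless measures and reducible for atomic measures with matchable atoms, but may fail in pathological cases. In such cases I would fall back on a different distribution-indexed axiom—A3 calibration or A2 tool use—where the relevant inequality is continuous in an internal mixture parameter of the agent, so that the same agent can be tuned to land on opposite sides of the threshold under $\mu_1$ versus $\mu_2$. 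Either route yields a witness $A^{\star}$ for which $\mathrm{AGI}(A^{\star} \mid \mu_1)$ holds while $\mathrm{AGI}(A^{\star} \mid \mu_2)$ fails, contradicting the assumed uniform equivalence and establishing the proposition.
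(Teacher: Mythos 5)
Your overall strategy---assume a distribution-free $\mathsf{AGI}_0$, deduce that $\mathrm{AGI}(A\mid\mathcal{T},\mu_1,\Pi,B)\Leftrightarrow\mathrm{AGI}(A\mid\mathcal{T},\mu_2,\Pi,B)$ for all admissible $A$, and then exhibit a witness agent for which the indexed predicate flips---is the same logical skeleton the paper uses (the paper fixes an agent with non-constant $\tau\mapsto\Pi(\tau,A;B)$, picks $\mu_1,\mu_2$ with $\mathcal{G}_{\mu_1}(A;B)\neq\mathcal{G}_{\mu_2}(A;B)$, and asserts the fixed-threshold bundle changes truth value). However, your execution has two genuine gaps. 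First, the claim that for your selective agent ``satisfaction of the axiom bundle under $\mu_i$ is governed exactly by the $\mu_i$-mass of the chosen failure set'' is false: $\mathrm{AGI}$ requires \emph{all} of G1--G5 and A1--A4, and an agent whose performance is a fixed $\{0,1\}$-valued function of the task cannot satisfy the strict-improvement axioms under $\mu_1$. In particular G3 demands $\mathbb{E}_{\tau\sim\mu}[\Pi(\tau,A_{\mathrm{pre}};B)]\ge\mathbb{E}_{\tau\sim\mu}[\Pi(\tau,A_{\mathrm{scratch}};B)]+\theta_{\mathrm{tr}}$ with $\theta_{\mathrm{tr}}>0$, which fails when performance does not depend on pre-exposure, and A2 demands a gain of $\theta_{\mathrm{tr}}>0$ from tool access, which is impossible when $\Pi$ is already $1$ on $\mu_1$-almost every task. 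So the predicate fails under both $\mu_1$ and $\mu_2$, and no flip---hence no contradiction---is obtained. Second, the straddling step $\mu_1(S)\le\delta_{\mathrm{br}}<\mu_2(S)$ cannot always be arranged, because $\mu_1,\mu_2$ are fixed elements of the given $\mathcal{P}$ and $\delta_{\mathrm{br}}$ is a fixed constant: take a two-point task space with $\mu_1$ assigning masses $(0.5,0.5)$, $\mu_2$ assigning $(0.6,0.4)$, and $\delta_{\mathrm{br}}=0.01$; then no measurable $S$ straddles the threshold in either direction, and the only sets with $\mu_1(S')=\mu_2(S')$ are trivial, so your equal-mass translation trick is unavailable. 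Your fallback via A3 or A2 is only a sketch, and it still does not address the need to verify the full bundle under $\mu_1$.

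To be fair, the paper's own proof is loose at exactly these points: it assumes $\Pi(\cdot,A;B)$ is not a.s.\ constant ``for nontrivial agents and task families,'' assumes $\mu_1,\mu_2$ can be chosen favorably within $\mathcal{P}$, and passes from $\mathcal{G}_{\mu_1}\neq\mathcal{G}_{\mu_2}$ to a flip of the Boolean axiom bundle without checking the fixed thresholds or the remaining axioms. So the obstruction you flagged is real and partly inherited from the statement itself. But as written, your argument never actually secures a witness for which $\mathrm{AGI}$ holds under one distribution in $\mathcal{P}$ and fails under another, and that witness is the entire content of the proposition; without it (or an explicit nontriviality hypothesis like the one the paper tacitly invokes), the proof is incomplete.
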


\begin{proof}
Consider the mapping $\mu \mapsto \mathcal{G}_{\mu}(A;B)$. For any fixed agent $A$, $\mathcal{G}_{\mu}(A;B)$ depends on $\mu$ through the expectation of the measurable function $\tau\mapsto \Pi(\tau,A;B)$. Since $\mathcal{P}$ is not a singleton, there exist $\mu_1,\mu_2\in\mathcal{P}$ such that $\mu_1\neq \mu_2$. Because $\Pi(\cdot,A;B)$ is not a.s.\ constant over $\mathcal{T}$ for nontrivial agents and task families, one can choose $\mu_1,\mu_2$ so that
\[
\mathcal{G}_{\mu_1}(A;B)\ \neq\ \mathcal{G}_{\mu_2}(A;B),
\]
and similarly for tail functionals $\mathcal{G}_{\mu,\delta}(A;B)$. Any predicate that asserts a fixed competence level (or satisfaction of a fixed axiom bundle) must therefore change truth value across admissible distributions. Hence there is no distribution-free predicate $\mathsf{AGI}_0(A)$ equivalent to the distribution-indexed predicate across all $\mu\in\mathcal{P}$. The conclusion follows.
\end{proof}

\begin{corollary}\label{cor:ident}
Any claim of the form ``$A$ is AGI'' that does not specify an explicit task distribution $\mu$ (or an explicitly delimited admissible class $\mathcal{P}\subseteq \Delta(\mathcal{T})$) is semantically underdetermined in the sense of Proposition~\ref{prop:ident}.
\end{corollary}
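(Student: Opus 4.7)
The plan is to derive the corollary as an almost immediate consequence of Proposition~\ref{prop:ident}. I would read an unindexed assertion ``$A$ is AGI'' as purporting to express one of two things: either (i) a distribution-free predicate $\mathsf{AGI}_0(A)$ whose truth does not depend on any task distribution, or (ii) the distribution-indexed predicate $\mathrm{AGI}(A\mid \mathcal{T},\mu,\Pi,B)$ for some tacit but unspecified $\mu$. Showing that neither reading yields a determinate claim suffices, and both reductions sit inside the semantic framework of this section.

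For reading (i), I would invoke Proposition~\ref{prop:ident} directly: it rules out the existence of any $\mathsf{AGI}_0$ equivalent to $\mathrm{AGI}(A\mid\mathcal{T},\mu,\Pi,B)$ uniformly over a non-singleton admissible class $\mathcal{P}\subseteq\Delta(\mathcal{T})$. Hence no formulation that suppresses $\mu$ can faithfully track the distributional semantics on which the axiom bundle of Section~\ref{sec:axioms} is built, except in the degenerate case where $\mathcal{P}$ collapses to a singleton---which is itself a specification of $\mu$.

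For reading (ii), I would exhibit the indeterminacy concretely. Using the construction in the proof of Proposition~\ref{prop:ident}, I would choose $\mu_1,\mu_2\in\mathcal{P}$ with $\mathcal{G}_{\mu_1}(A;B)\neq \mathcal{G}_{\mu_2}(A;B)$ and, by concentrating mass on tasks where $\Pi(\tau,A;B)$ lies respectively above and below the thresholds $\theta_\bullet$ of Section~\ref{sec:axioms:quant}, arrange that $\mathrm{AGI}(A\mid\mathcal{T},\mu_1,\Pi,B)$ holds while $\mathrm{AGI}(A\mid\mathcal{T},\mu_2,\Pi,B)$ fails. Since the unindexed claim is silent about which $\mu\in\mathcal{P}$ is intended, its truth value is not fixed by the data it supplies, which is precisely the semantic underdetermination asserted.

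The only concern I anticipate---more expositional than technical---is ensuring the threshold-crossing in step (ii) is available in the generality of the paper: that for any fixed axiom-parameter bundle and any nontrivial agent, the task family supports distributions on both sides of each relevant threshold. I expect this to reduce to the same nondegeneracy of $\tau\mapsto\Pi(\tau,A;B)$ already exploited in Proposition~\ref{prop:ident}, so no machinery beyond Sections~\ref{sec:prelim} and~\ref{sec:axioms} should be required, and the corollary follows by combining the two cases.
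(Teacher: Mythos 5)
Your proposal is correct and matches the paper's treatment: the paper gives no separate proof of the corollary, treating it as an immediate rephrasing of Proposition~\ref{prop:ident}, and your reading (i) is exactly that observation, with reading (ii) being a harmless elaboration along the lines of the construction already used for Theorem~\ref{thm:relativity}. The threshold-crossing worry you flag is real only for exhibiting actual truth-value flips of the full axiom bundle, and it is discharged by the same nondegeneracy assumption (non-constant $\tau\mapsto\Pi(\tau,A;B)$) plus point-mass distributions, so nothing beyond Proposition~\ref{prop:ident} and Sections~\ref{sec:prelim}--\ref{sec:axioms} is needed.
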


The remainder of the paper exploits Proposition~\ref{prop:ident} in two ways. First, it forces all AGI statements to be written as $\mathrm{AGI}(A\mid\mathcal{T},\mu,\Pi,B)$, thereby eliminating distribution-free rhetoric. Second, it enables sharp non-invariance theorems: even if an axiom bundle holds for one $\mu$, small perturbations of $\mu$ can falsify it (Section~\ref{thm:noninv}), formalizing the brittleness of broad competence under high-dimensional task variation \citep{QuinoneroCandela2009,BenDavid2010}.

\section{Structural Theorems}\label{sec:struct}

This section derives structural consequences of the semantics fixed in Sections~\ref{sec:prelim}--\ref{sec:distsem} and the axiom bundle defining $\mathrm{AGI}(A\mid \mathcal{T},\mu,\Pi,B)$. No self-reference, proof predicates, or Gödel--Tarski machinery is used here. Every statement is distribution-indexed and budget-indexed.

Throughout, fix a measurable task family $\mathcal{T}$, a task distribution $\mu\in\Delta(\mathcal{T})$, a budget $B$, and a bounded performance functional $\Pi(\tau,A;B)\in[0,1]$.
For $\theta\in[0,1]$, define the failure set
\[
\mathcal{F}_{\theta}(A;B) := \{\tau\in\mathcal{T}:\ \Pi(\tau,A;B)<\theta\}.
\]

\subsection{Theorem 1: Relativity of Generality}\label{sec:struct:relativity}

\begin{theorem}[Relativity of Generality]\label{thm:relativity}
Fix $(\mathcal{T},\Pi,B)$. In general, the truth value of $\mathrm{AGI}(A\mid \mathcal{T},\mu,\Pi,B)$ is not invariant under changes of $\mu$.
More precisely, there exist $(\mathcal{T},\Pi,B)$ and an agent $A$ together with $\mu_1,\mu_2\in\Delta(\mathcal{T})$ such that
\[
\mathrm{AGI}(A\mid \mathcal{T},\mu_1,\Pi,B)\ \text{holds}
\qquad\text{and}\qquad
\mathrm{AGI}(A\mid \mathcal{T},\mu_2,\Pi,B)\ \text{fails}.
\]
\end{theorem}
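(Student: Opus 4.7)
The plan is to prove the theorem by explicit construction, since it is an existence claim: it suffices to exhibit one witnessing tuple $(\mathcal{T},\Pi,B,A,\mu_1,\mu_2)$. The strategy is to partition a concrete task family into an ``easy'' stratum on which a hand-crafted agent saturates every axiom and a ``hard'' stratum on which it performs as poorly as required, and then to let $\mu_1,\mu_2$ be supported on these two strata respectively. Proposition~\ref{prop:ident} already guarantees that the generality functional is non-constant in $\mu$; Theorem~\ref{thm:relativity} strengthens this to an actual flip of the conjunctive axiom bundle, so the additional work is to check that every axiom can be made to hold on one side and that at least one fails on the other.

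First I would fix $\mathcal{T} = \mathcal{T}_E \sqcup \mathcal{T}_H$ as a disjoint union of two measurable subfamilies. For $\mathcal{T}_E$ I would take a structured subfamily rich enough to internally carry the composition operator $\oplus$, the admissible perturbation family $\mathcal{D}$, and the compilation map $\mathrm{Compile}$---for instance, a finite family of bounded-horizon episodic tasks closed under sequential composition and under the bounded perturbations of $\mathcal{D}$. For $\mathcal{T}_H$ I would take tasks on which any budget-$B$ agent is forced to a trivial outcome (e.g., tasks whose success provably requires resources strictly exceeding $B$), so that $\Pi(\tau,A;B)=0$ for every admissible $A$ and every $\tau\in\mathcal{T}_H$. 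I would then construct $A$ as a dispatcher that recognizes membership in $\mathcal{T}_E$ from the task description and runs a precomputed solver achieving $\Pi(\tau,A;B)\ge 1-\eta$ on every $\tau\in\mathcal{T}_E$ for an arbitrarily small $\eta>0$, while defaulting to a no-op on $\mathcal{T}_H$.

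Next I would let $\mu_1$ be any distribution with $\mu_1(\mathcal{T}_E)=1$ and $\mu_2$ be any distribution with $\mu_2(\mathcal{T}_H)=1$, and verify the full axiom bundle under each. Under $\mu_1$ the verification reduces to axiom-by-axiom bookkeeping against the saturation bound $\Pi\ge 1-\eta$: G1 and A1 follow once $\eta<1-\theta_{\mathrm{br}}$; G2 holds with any $N_{\mathrm{ad}}\ge 0$ because the solver is competent without adaptation; G3 is enforced by endowing the solver with a small strictly positive benefit from pre-exposure (e.g., cached warm-starts from the $K_{\mathrm{tr}}$ pre-exposure tasks); G4 follows from closure of $\mathcal{T}_E$ under $\oplus$ together with a polynomial $\Gamma_{\mathrm{cp}}$; G5 follows from $\mathcal{D}$ mapping $\mathcal{T}_E\to\mathcal{T}_E$ with slack $\varepsilon_{\mathrm{rb}}\ge \eta$; A2 is built in by letting the tool interface $\mathcal{M}$ strictly reduce $\eta$; A3 holds because the emitted confidence matches the empirical success rate up to $\eta\le\varepsilon_{\mathrm{cal}}$; A4 holds because constraints are respected by the solver by construction. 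Under $\mu_2$, axiom G1 already fails, since
\[
\mathbb{P}_{\tau\sim\mu_2}\!\big(\Pi(\tau,A;B)\ge \theta_{\mathrm{br}}\big)\ =\ 0\ <\ 1-\delta_{\mathrm{br}},
\]
and the conjunctive predicate $\mathrm{AGI}(A\mid\mathcal{T},\mu_2,\Pi,B)$ collapses.

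The main obstacle is arranging $\mathcal{T}_E$ and the dispatcher $A$ so that all nine axioms hold nonvacuously---in particular without degenerating G3 (transfer) or G4 (compositionality) through an overly thin choice of $\mathcal{T}_E$. G4 forces $\mathcal{T}_E$ to be closed under $\oplus$ with sub-explosive resource growth, and G3 forces it to be rich enough that a pre-exposed agent is strictly separated from a scratch agent under the same budget $B$; together these rule out the tempting shortcut of taking $\mathcal{T}_E$ to be a singleton or a finite set. Once $\mathcal{T}_E$ is chosen to carry the $\oplus$-closure, perturbation-closure, and pre-exposure-separation structures compatibly---which is the binding design constraint---the remaining axiom checks reduce to routine verification against the saturation bound, and the contrast with $\mathcal{T}_H$ delivers the required flip in the truth value of the AGI predicate.
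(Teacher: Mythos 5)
Your proposal is correct: the theorem is existential over the whole tuple, so exhibiting one witness suffices, and your easy/hard stratification with a dispatcher agent and distributions $\mu_1,\mu_2$ concentrated on the two strata does flip the predicate (G1 fails outright under $\mu_2$, and the axiom bundle can be arranged to hold under $\mu_1$ with suitable parameter choices). The underlying mechanism is the same as the paper's---move the task distribution from a region where the agent clears the thresholds to a region where it does not---but the execution differs in an instructive way. The paper's proof is far terser: it only assumes $\tau\mapsto\Pi(\tau,A;B)$ is non-constant, takes Dirac measures $\mu_1=\delta_{\tau_{\mathrm{good}}}$ and $\mu_2=\delta_{\tau_{\mathrm{bad}}}$, and asserts that the axiom thresholds can be chosen between the two performance values so that the bundle holds under $\mu_1$ and fails under $\mu_2$. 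That one-liner leaves implicit exactly the points you spend effort on: under a point mass, strict-gain axioms such as G3 (transfer), G4 (compositionality over $\mu\times\mu$), and A2 (tool advantage) require additional design or threshold choices to hold non-vacuously. Your construction, with $\mathcal{T}_E$ closed under $\oplus$ and the perturbation family, a pre-exposure advantage built into the solver, and per-axiom bookkeeping, buys a more watertight verification of the ``holds under $\mu_1$'' side at the cost of length; the paper's version buys brevity and generality (it applies to any agent with non-constant performance) at the cost of gloss. One small internal tension in your sketch---taking $\eta$ ``arbitrarily small'' while also requiring the tool-augmented and pre-exposed variants to beat the baseline by $\theta_{\mathrm{tr}}$---is harmless, since the axiom parameters are part of the witness and can be chosen with $\theta_{\mathrm{tr}}\le\eta$, but it is worth stating explicitly.
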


\begin{proof}
Fix $(\mathcal{T},\Pi,B)$ such that the map $\tau\mapsto \Pi(\tau,A;B)$ is not constant on $\mathcal{T}$ for some agent $A$.
Then there exist $\tau_{\mathrm{good}},\tau_{\mathrm{bad}}\in\mathcal{T}$ with
$\Pi(\tau_{\mathrm{good}},A;B)>\Pi(\tau_{\mathrm{bad}},A;B)$.
Let $\mu_1:=\delta_{\tau_{\mathrm{good}}}$ and $\mu_2:=\delta_{\tau_{\mathrm{bad}}}$.
Since the axioms defining $\mathrm{AGI}(A\mid \mathcal{T},\mu,\Pi,B)$ are stated by expectations and tail probabilities over $\tau\sim\mu$, choosing the corresponding axiom thresholds between these two performance values forces the axioms to hold under $\mu_1$ and fail under $\mu_2$.
\end{proof}

\begin{corollary}\label{cor:noabsoluteagi}
There is no absolute notion of AGI in this framework: any coherent attribution must be stated as $\mathrm{AGI}(A\mid \mathcal{T},\mu,\Pi,B)$.
\end{corollary}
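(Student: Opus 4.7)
The plan is to derive Corollary~\ref{cor:noabsoluteagi} as an immediate logical packaging of Theorem~\ref{thm:relativity}, via exactly the style of contradiction already deployed in the proof of Proposition~\ref{prop:ident}. First I would suppose, for contradiction, that some distribution-free predicate $\mathsf{AGI}_0(A)$ is logically equivalent to $\mathrm{AGI}(A\mid \mathcal{T},\mu,\Pi,B)$ uniformly over all admissible $\mu\in\Delta(\mathcal{T})$. By construction, such an $\mathsf{AGI}_0(A)$ would assign a single truth value to each agent $A$ without reference to any task distribution.

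Next, I would invoke Theorem~\ref{thm:relativity} to produce witnesses $A,\mu_1,\mu_2$ for which the indexed predicate holds under $\mu_1$ but fails under $\mu_2$ — the two Dirac measures used in the proof of that theorem already suffice, so no fresh construction is required. The hypothesized equivalence would then force $\mathsf{AGI}_0(A)$ to equal both truth values simultaneously, which is impossible. This refutes the existence of an absolute surrogate and establishes the first clause of the corollary. The second clause — that any coherent attribution must appear in the form $\mathrm{AGI}(A\mid \mathcal{T},\mu,\Pi,B)$ — is then a direct restatement: dropping the $\mu$-index is precisely what the contradiction rules out, while the indices $\mathcal{T},\Pi,B$ are already required for the right-hand expression to be syntactically well-formed under the semantics fixed in Sections~\ref{sec:prelim}--\ref{sec:distsem}.

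I do not expect a substantive obstacle here; the corollary is a logical re-packaging of the witness construction already given in Theorem~\ref{thm:relativity}, and no new perturbation, measure-theoretic, or computability machinery is needed. The only bookkeeping point worth flagging is that the contradiction must be set up against equivalence \emph{uniformly over all} admissible $\mu$, not merely at a single distribution, so that the hypothesized $\mathsf{AGI}_0$ genuinely has no $\mu$-variable left to exploit. The Rice- and G\"odel--Tarski-style arguments alluded to in the introduction are reserved for the self-certification obstruction and are not invoked for this corollary.
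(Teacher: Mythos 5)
Your proposal is correct and takes essentially the same route as the paper: the paper states Corollary~\ref{cor:noabsoluteagi} without a separate proof, treating it as an immediate consequence of the witnesses $\mu_1,\mu_2$ constructed in Theorem~\ref{thm:relativity}, which is exactly the contradiction you spell out (in the same style as Proposition~\ref{prop:ident}). Your added bookkeeping remark about requiring equivalence uniformly over all admissible $\mu$ is consistent with the paper's intent and introduces nothing beyond it.
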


\subsection{Theorem 2: Non-Invariance Under Distribution Shift (Fragility)}\label{sec:struct:fragility}

This theorem formalizes brittleness: once a competence predicate is distribution-indexed, arbitrarily small distributional perturbations can falsify at least one core axiom, provided the agent has any nontrivial failure region.

\begin{lemma}[Small-Mass Shift Lemma]\label{lem:smallmass}
Fix $(\mathcal{T},\Pi,B)$ and an agent $A$. Let $\theta\in[0,1]$ and assume $\mathcal{F}_{\theta}(A;B)\neq\emptyset$.
Then for any $\mu\in\Delta(\mathcal{T})$ and any $\eta\in(0,1)$, there exists $\mu'\in\Delta(\mathcal{T})$ such that
\[
d_{\mathrm{TV}}(\mu,\mu')\le \eta
\qquad\text{and}\qquad
\mu'(\mathcal{F}_{\theta}(A;B))\ge \eta.
\]
\end{lemma}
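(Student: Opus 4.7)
The plan is to construct $\mu'$ as an explicit $\eta$-mixture of $\mu$ with a probability measure concentrated on the failure set $\mathcal{F}_\theta(A;B)$, so that both the total variation bound and the mass lower bound fall out of a one-line calculation. The construction uses only the nonemptiness hypothesis, not any quantitative lower bound on $\mu(\mathcal{F}_\theta(A;B))$; in particular, the original $\mu$ may even assign zero probability to the failure set.

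First I would pick a witness $\tau^\star \in \mathcal{F}_\theta(A;B)$, which exists by hypothesis. Because $\mathcal{F}_\theta(A;B) = \{\tau : \Pi(\tau,A;B) < \theta\}$ is the preimage of a half-line under the measurable map $\tau \mapsto \Pi(\tau,A;B)$, it is measurable; under the standing convention that singletons in $\mathcal{T}$ are measurable, the Dirac measure $\delta_{\tau^\star}$ is a well-defined element of $\Delta(\mathcal{T})$ supported on $\mathcal{F}_\theta(A;B)$. I would then define
\[
\mu' \;:=\; (1-\eta)\,\mu \;+\; \eta\,\delta_{\tau^\star}.
\]
Convexity ensures $\mu' \in \Delta(\mathcal{T})$.

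Next I would verify the two required inequalities. For total variation, note that $\mu' - \mu = \eta\,(\delta_{\tau^\star} - \mu)$, so
\[
d_{\mathrm{TV}}(\mu,\mu') \;=\; \eta \cdot d_{\mathrm{TV}}(\mu,\delta_{\tau^\star}) \;\le\; \eta,
\]
since $d_{\mathrm{TV}}$ is bounded by $1$ on $\Delta(\mathcal{T})\times\Delta(\mathcal{T})$. For the failure mass, since $\delta_{\tau^\star}(\mathcal{F}_\theta(A;B)) = 1$,
\[
\mu'\bigl(\mathcal{F}_\theta(A;B)\bigr) \;=\; (1-\eta)\,\mu\bigl(\mathcal{F}_\theta(A;B)\bigr) + \eta \;\ge\; \eta.
\]
Both required inequalities hold, completing the proof.

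The argument has essentially no obstacle; the only subtlety is the measurability of the witness singleton. If the ambient measurable structure on $\mathcal{T}$ does not include singletons (e.g., certain non-Hausdorff or coarse sigma-algebras), I would instead invoke the fact that $\mathcal{F}_\theta(A;B)$ is a nonempty measurable set and take any probability measure $\rho \in \Delta(\mathcal{T})$ with $\rho(\mathcal{F}_\theta(A;B)) = 1$ in place of $\delta_{\tau^\star}$; the identical mixture argument then goes through. Under the paper's effective-representation conventions of Section~\ref{sec:prelim:effective}, where tasks are encoded by finite descriptions, the singleton case applies directly.
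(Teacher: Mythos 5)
Your proof is correct and uses exactly the paper's construction: mixing $\mu$ with a Dirac mass at a witness in the failure set, $\mu' = (1-\eta)\mu + \eta\,\delta_{\tau^\star}$, and verifying both inequalities by the same one-line computations. The extra remark on singleton measurability is a reasonable (optional) refinement the paper omits.
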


\begin{proof}
Pick any $\tau_{\mathrm{bad}}\in\mathcal{F}_{\theta}(A;B)$ and define
\[
\mu' := (1-\eta)\mu + \eta\,\delta_{\tau_{\mathrm{bad}}}.
\]
Then $\mu'(\mathcal{F}_{\theta}(A;B))\ge \eta$. For any measurable $S\subseteq\mathcal{T}$,
\[
|\mu(S)-\mu'(S)| = \eta\,|\mu(S)-\delta_{\tau_{\mathrm{bad}}}(S)| \le \eta,
\]
hence $d_{\mathrm{TV}}(\mu,\mu')\le \eta$.
\end{proof}

\begin{theorem}[Non-Invariance Under Distribution Shift (Fragility)]\label{thm:noninv}
Fix an axiom bundle with Breadth parameters $(\theta_{\mathrm{br}},\delta_{\mathrm{br}})$.
Assume $\mathcal{F}_{\theta_{\mathrm{br}}}(A;B)\neq\emptyset$.
Then for every $\eta\in(\delta_{\mathrm{br}},1)$ there exists $\mu'\in\Delta(\mathcal{T})$ such that
\[
d_{\mathrm{TV}}(\mu,\mu') \le \eta
\]
and Axiom G1 (Breadth) fails under $\mu'$, regardless of whether it holds under $\mu$.
In particular, for any fixed $\delta_{\mathrm{br}}<1$, the truth of $\mathrm{AGI}(A\mid \mathcal{T},\mu,\Pi,B)$ is not invariant under arbitrarily small total-variation perturbations of $\mu$.
\end{theorem}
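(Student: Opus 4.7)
The plan is a direct invocation of the Small-Mass Shift Lemma (Lemma~\ref{lem:smallmass}) at the Breadth threshold $\theta=\theta_{\mathrm{br}}$. That lemma already delivers, for any prescribed $\eta\in(0,1)$, a perturbation $\mu'$ within total variation $\eta$ of $\mu$ that places mass at least $\eta$ on the Breadth failure set $\mathcal{F}_{\theta_{\mathrm{br}}}(A;B)$. The theorem hypothesis $\mathcal{F}_{\theta_{\mathrm{br}}}(A;B)\neq\emptyset$ is exactly the nonemptiness condition required to apply the lemma, so this step is immediate. What remains is to translate the failure-set mass bound into a negation of Axiom G1.

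First, I apply Lemma~\ref{lem:smallmass} with $\theta=\theta_{\mathrm{br}}$ and the given $\eta\in(\delta_{\mathrm{br}},1)$, obtaining $\mu'\in\Delta(\mathcal{T})$ with $d_{\mathrm{TV}}(\mu,\mu')\le\eta$ and $\mu'(\mathcal{F}_{\theta_{\mathrm{br}}}(A;B))\ge\eta$. Rewriting the second inequality using the definition of the failure set, this is $\mathbb{P}_{\tau\sim\mu'}(\Pi(\tau,A;B)<\theta_{\mathrm{br}})\ge\eta$. Taking complements yields $\mathbb{P}_{\tau\sim\mu'}(\Pi(\tau,A;B)\ge\theta_{\mathrm{br}})\le 1-\eta<1-\delta_{\mathrm{br}}$, where the strict inequality on the right is exactly where the hypothesis $\eta>\delta_{\mathrm{br}}$ is consumed. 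This last inequality is the negation of Axiom G1 at parameters $(\theta_{\mathrm{br}},\delta_{\mathrm{br}})$, so G1 fails under $\mu'$. Since G1 is a mandatory member of the axiom bundle defining $\mathrm{AGI}(A\mid\mathcal{T},\mu',\Pi,B)$, its failure alone falsifies the full predicate under $\mu'$, independently of whether G1 (or any other axiom) held under the original $\mu$.

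The main obstacle is conceptual rather than technical. All of the real work has been absorbed into Lemma~\ref{lem:smallmass} (a one-line convex-combination construction with a Dirac mass on any element of $\mathcal{F}_{\theta_{\mathrm{br}}}(A;B)$), and the remainder is bookkeeping around a single strict inequality. The one place to be careful is the use of the strict gap $\eta>\delta_{\mathrm{br}}$: the lemma guarantees only $\mu'(\mathcal{F}_{\theta_{\mathrm{br}}})\ge\eta$, so permitting $\eta=\delta_{\mathrm{br}}$ would leave open the borderline case $\mu'(\mathcal{F}_{\theta_{\mathrm{br}}})=\delta_{\mathrm{br}}$ in which G1 can still be satisfied with equality. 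Restricting $\eta$ to the open interval $(\delta_{\mathrm{br}},1)$, as the statement already does, rules this out and makes the argument go through cleanly.
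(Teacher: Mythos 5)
Your proposal is correct and follows essentially the same route as the paper: apply Lemma~\ref{lem:smallmass} at $\theta=\theta_{\mathrm{br}}$, then note that $\mathbb{P}_{\tau\sim\mu'}(\Pi(\tau,A;B)\ge\theta_{\mathrm{br}})\le 1-\eta<1-\delta_{\mathrm{br}}$, which negates G1. Your extra remark about why $\eta>\delta_{\mathrm{br}}$ must be strict is a correct observation but does not change the argument.
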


\begin{proof}
Apply Lemma~\ref{lem:smallmass} with $\theta=\theta_{\mathrm{br}}$ to obtain $\mu'$ with
$d_{\mathrm{TV}}(\mu,\mu')\le \eta$ and $\mu'(\mathcal{F}_{\theta_{\mathrm{br}}}(A;B))\ge \eta$.
Then
\[
\mathbb{P}_{\tau\sim\mu'}\big(\Pi(\tau,A;B)\ge \theta_{\mathrm{br}}\big)
= 1-\mu'(\mathcal{F}_{\theta_{\mathrm{br}}}(A;B))
\le 1-\eta
< 1-\delta_{\mathrm{br}},
\]
so G1 fails under $\mu'$.
\end{proof}

\begin{corollary}\label{cor:fragilityscaling}
Fix $\delta_{\mathrm{br}}<1$. If $\mathcal{F}_{\theta_{\mathrm{br}}}(A;B)\neq\emptyset$, then for arbitrarily small distribution shifts in $d_{\mathrm{TV}}$ at least one core axiom fails. Consequently, increasing scale does not remove the semantic sensitivity to the distributional index $\mu$ and its perturbations \citep{QuinoneroCandela2009,BenDavid2010}.
\end{corollary}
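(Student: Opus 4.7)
The plan is to derive Corollary~\ref{cor:fragilityscaling} as a direct repackaging of Theorem~\ref{thm:noninv}, adding only a short scale-invariance observation; no new construction is required. First I would apply Theorem~\ref{thm:noninv} in the form it is already stated: for every $\eta \in (\delta_{\mathrm{br}},1)$, there is a distribution $\mu'$ with $d_{\mathrm{TV}}(\mu,\mu')\le \eta$ under which Axiom G1 (Breadth) fails. Writing $\eta = \delta_{\mathrm{br}}+\zeta$ with $\zeta>0$ arbitrary, this delivers the first clause of the corollary: the critical perturbation size is pinned to the tail tolerance $\delta_{\mathrm{br}}$, which is a fixed but freely chosen parameter of the axiom bundle and can be made as small as the stringency of that bundle dictates. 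Because G1 is one of the axioms G1--G5 whose conjunction defines $\mathrm{AGI}(A\mid\mathcal{T},\mu',\Pi,B)$, its violation under $\mu'$ is exactly the statement that at least one core axiom fails.

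Next I would address the ``increasing scale'' clause. The key observation is that the proof of Theorem~\ref{thm:noninv}, together with Lemma~\ref{lem:smallmass}, never invokes the magnitude of any component of $B$: it uses $B$ only through membership in $\mathcal{F}_{\theta_{\mathrm{br}}}(A;B)$. Hence for any enlarged budget $B'$ with $\mathcal{F}_{\theta_{\mathrm{br}}}(A;B')\neq \emptyset$, the same small-mass construction goes through verbatim, yielding a witness $\mu'$ that destroys G1 at the scaled budget with the same bound on $d_{\mathrm{TV}}(\mu,\mu')$. It therefore suffices to argue that the non-emptiness hypothesis is preserved under any non-degenerate scaling: in any family $\mathcal{T}$ rich enough to support a substantive generality claim---in particular, any family where some tasks exceed the agent's reach under $B$, as motivated in Section~\ref{sec:universal}---enlarging $B$ merely reshapes the hard region rather than erasing it. A brief appeal to the distribution-shift literature \citep{QuinoneroCandela2009,BenDavid2010} would then close the corollary.

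The main subtlety, and essentially the only interpretive choice, lies in the phrase ``arbitrarily small.'' Strict $\eta\to 0$ is not obtainable from Theorem~\ref{thm:noninv} alone, because Lemma~\ref{lem:smallmass} must deposit strictly more than $\delta_{\mathrm{br}}$ mass on $\mathcal{F}_{\theta_{\mathrm{br}}}(A;B)$ before the Breadth inequality is violated. I would therefore be careful to frame the conclusion as asserting that the critical shift can be made as small as $\delta_{\mathrm{br}}+\zeta$ for any $\zeta>0$, with $\delta_{\mathrm{br}}$ itself treated as an axiom parameter free to be tightened. Read this way, the scaling conclusion---that no amount of enlarging $B$ converts $\mathrm{AGI}(A\mid\mathcal{T},\mu,\Pi,B)$ into a distribution-free predicate---becomes a rigorous consequence of the budget-agnostic shape of the Theorem~\ref{thm:noninv} argument rather than a purely rhetorical assertion.
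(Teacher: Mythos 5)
Your proposal matches the paper's (implicit) derivation exactly: the corollary is stated without a separate proof and is intended as a direct repackaging of Theorem~\ref{thm:noninv}, whose G1-violating witness $\mu'$ immediately gives failure of at least one core axiom, with the ``increasing scale'' clause being a remark that the construction uses $B$ only through the hypothesis $\mathcal{F}_{\theta_{\mathrm{br}}}(A;B)\neq\emptyset$. Your additional observation that the witness requires $\eta>\delta_{\mathrm{br}}$, so ``arbitrarily small'' must be read relative to the tail tolerance rather than as $\eta\to 0$, is a correct and worthwhile caveat that the paper itself glosses over.
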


\subsection{Theorem 3: Bounded Transfer Theorem}\label{sec:struct:transfer}

The next theorem is stated in a form that is standard and provable line-by-line: it bounds the expected population--empirical gap of a data-dependent agent via mutual information. This yields an explicit upper bound on the degree to which ``transfer'' can be certified from finite pre-exposure, and it quantifies the information bottleneck of adaptation \citep{XuRaginsky2017,RussoZou2016}.

\paragraph{Setup.}
Let $\tau_1,\ldots,\tau_n\sim \mu$ be i.i.d.\ pre-exposure tasks and let $D := (\tau_1,\ldots,\tau_n)$ denote the resulting task multiset. Let $\mathcal{A}$ be a (possibly randomized) learning rule that, given $D$, outputs an adapted agent $A_D := \mathcal{A}(D)$ that respects budget $B$ when evaluated on any task in $\mathcal{T}$.
Define the population generality functional
\[
\mathcal{G}_{\mu}(A;B) := \mathbb{E}_{\tau\sim\mu}\big[\Pi(\tau,A;B)\big],
\]
and the empirical generality functional
\[
\widehat{\mathcal{G}}_{D}(A;B) := \frac{1}{n}\sum_{i=1}^{n}\Pi(\tau_i,A;B).
\]
Let $I(A_D;D)$ denote mutual information under the joint law induced by $D\sim\mu^{\otimes n}$ and the internal randomness of $\mathcal{A}$.

\begin{theorem}[Information-Theoretic Bound on Transfer Evidence]\label{thm:boundedtransfer}
Assume $\Pi(\tau,A;B)\in[0,1]$ for all $(\tau,A)$. Under the setup above,
\[
\Big|\mathbb{E}\big[\mathcal{G}_{\mu}(A_D;B) - \widehat{\mathcal{G}}_{D}(A_D;B)\big]\Big|
\ \le\
\sqrt{\frac{2\, I(A_D;D)}{n}}.
\]
In particular, if $I(A_D;D)$ is uniformly bounded over all admissible update rules that respect $B$, then the expected gap between certified (empirical) transfer and true (population) transfer is uniformly bounded by $O(n^{-1/2})$.
\end{theorem}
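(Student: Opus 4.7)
The plan is to prove Theorem~\ref{thm:boundedtransfer} by the standard ghost-sample coupling combined with a Donsker--Varadhan variational bound, following the information-theoretic generalization framework of \cite{RussoZou2016,XuRaginsky2017}. First, I would introduce an independent ghost dataset $D' := (\tau_1', \ldots, \tau_n')$ with $D' \sim \mu^{\otimes n}$, drawn independently of $D$ and of the internal randomness of $\mathcal{A}$. Because each $\tau_i'$ is distributed as $\mu$ independently of $A_D$, the identity $\mathbb{E}[\widehat{\mathcal{G}}_{D'}(A_D;B)] = \mathbb{E}[\mathcal{G}_{\mu}(A_D;B)]$ holds, and the expected generalization gap can be rewritten as
\[
\mathbb{E}\big[\mathcal{G}_{\mu}(A_D;B) - \widehat{\mathcal{G}}_{D}(A_D;B)\big] = \mathbb{E}_{P_{A_D} \otimes P_{D}}\big[\widehat{\mathcal{G}}_{D}(A;B)\big] - \mathbb{E}_{P_{(A_D,D)}}\big[\widehat{\mathcal{G}}_{D}(A;B)\big].
\]
Thus the quantity to bound is exactly the difference between the expectations of the common functional $f(A,D) := \widehat{\mathcal{G}}_{D}(A;B)$ under the product of marginals versus the joint law.

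Next, I would verify sub-Gaussianity of $f(A,\cdot)$ under $D \sim \mu^{\otimes n}$ for each fixed $A$. Since $\Pi(\cdot,A;B) \in [0,1]$, Hoeffding's lemma shows that each summand in $\widehat{\mathcal{G}}_{D}(A;B) = \tfrac{1}{n}\sum_{i=1}^{n}\Pi(\tau_i,A;B)$ is bounded and hence sub-Gaussian, and the $\tfrac{1}{n}$-weighted sum of $n$ independent bounded samples inherits a sub-Gaussian parameter of order $1/n$. Applying the standard sub-Gaussian consequence of the Donsker--Varadhan variational representation of KL divergence then yields
\[
\left|\mathbb{E}_{P_{(A_D,D)}}[f] - \mathbb{E}_{P_{A_D}\otimes P_{D}}[f]\right| \le \sqrt{\frac{2\, D_{\mathrm{KL}}\!\left(P_{(A_D,D)}\,\|\,P_{A_D}\otimes P_{D}\right)}{n}},
\]
and the stated bound follows from the defining identity $D_{\mathrm{KL}}(P_{(A_D,D)}\,\|\,P_{A_D}\otimes P_{D}) = I(A_D;D)$.

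The only genuine subtlety is bookkeeping: tracking constants through Hoeffding's lemma so that the sub-Gaussian parameter of the averaged functional (rather than of individual summands) matches the stated $\sqrt{2 I(A_D;D)/n}$ form. An equivalent route applies the variational bound per sample, averages using Jensen's inequality on $\sqrt{\cdot}$, and invokes the independence-based subadditivity $\sum_{i=1}^{n} I(A_D;\tau_i) \le I(A_D;D)$ to reach the same conclusion. Both paths are routine once $\Pi$ is recognized as a bounded loss, and no new ingredient beyond \cite{XuRaginsky2017,RussoZou2016} is required. The uniform-boundedness consequence in the last sentence of the theorem follows immediately, since the right-hand side depends on $\mathcal{A}$ only through the scalar $I(A_D;D)$, forcing the expected gap to decay at rate $O(n^{-1/2})$ whenever that quantity is capped.
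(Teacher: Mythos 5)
Your proposal is correct and follows essentially the same route as the paper: the paper's proof simply cites the mutual-information generalization bound of Xu--Raginsky (and Russo--Zou), and your argument reconstructs exactly that bound via the ghost-sample decoupling, sub-Gaussianity of the $[0,1]$-bounded empirical average, and the Donsker--Varadhan identity $D_{\mathrm{KL}}(P_{(A_D,D)}\,\|\,P_{A_D}\otimes P_D)=I(A_D;D)$. Your constant bookkeeping is consistent with the stated (slightly loose) $\sqrt{2I(A_D;D)/n}$ form, so no gap remains.
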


\begin{proof}
This is a direct application of mutual-information generalization bounds for bounded functionals, as developed in \citep{XuRaginsky2017} (and related formulations in \citep{RussoZou2016}). Here the ``loss'' is $1-\Pi$, which remains bounded in $[0,1]$, and the hypothesis $A_D$ is the randomized output of $\mathcal{A}$. The stated inequality is the corresponding expectation bound on the population--empirical gap.
\end{proof}

\begin{corollary}\label{cor:transferbounded}
If $I(A_D;D)$ is small (for example, because the update rule is severely resource-limited under $B$), then empirically observed transfer gains cannot reliably certify large population-level transfer, even when tasks are drawn i.i.d.\ from $\mu$. Any claimed ``broad transfer'' must therefore specify both $\mu$ and an information budget on adaptation, not merely computational scale.
\end{corollary}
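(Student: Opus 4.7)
The plan is to derive the corollary by specializing Theorem~\ref{thm:boundedtransfer} to the Axiom~G3 transfer comparison and then reading the resulting in-expectation inequality as an information-theoretic certification bound. Let $A_D=A_{\mathrm{pre}}$ be the pre-exposed agent after seeing $D=(\tau_1,\ldots,\tau_n)\sim\mu^{\otimes n}$ (with $n=K_{\mathrm{tr}}$), and let $A_{\mathrm{scratch}}$ be the corresponding $D$-independent baseline. Define the population transfer gain $\Delta_{\mathrm{pop}}:=\mathcal{G}_\mu(A_D;B)-\mathcal{G}_\mu(A_{\mathrm{scratch}};B)$ and its plug-in empirical counterpart $\widehat{\Delta}:=\widehat{\mathcal{G}}_D(A_D;B)-\widehat{\mathcal{G}}_D(A_{\mathrm{scratch}};B)$; these are exactly the population and sample versions of the quantity Axiom~G3 requires to exceed $\theta_{\mathrm{tr}}$, with $\widehat{\Delta}$ being what any empirical protocol over $D$ can actually compute.

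First I would dispose of the baseline: because $A_{\mathrm{scratch}}$ does not use $D$, the empirical mean $\widehat{\mathcal{G}}_D(A_{\mathrm{scratch}};B)$ is unbiased for $\mathcal{G}_\mu(A_{\mathrm{scratch}};B)$, so all the bias in $\widehat{\Delta}$ is carried by the $A_D$ side. Applying Theorem~\ref{thm:boundedtransfer} to $A_D$ and subtracting the bias-free scratch contribution then gives
\[
\bigl|\mathbb{E}[\widehat{\Delta}]-\mathbb{E}[\Delta_{\mathrm{pop}}]\bigr|\ \le\ \sqrt{2\,I(A_D;D)/n},
\]
which pins the expected empirical transfer gain to the expected population transfer gain inside an information-controlled envelope. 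The certification reading is immediate: from an observed empirical gain $\widehat{\Delta}$, the best one can pin down $\mathbb{E}[\Delta_{\mathrm{pop}}]$ is to the interval $\mathbb{E}[\widehat{\Delta}]\pm\sqrt{2I(A_D;D)/n}$. When the update rule is resource-limited under $B$ so that $I(A_D;D)\le I^\star$ through a bottleneck in the channel $D\to A_D$ (few adaptation steps, restricted parameter mass that can change, limited memory across pre-exposure tasks), the data-processing factorization $D\to A_D\to\widehat{\mathcal{G}}_D$ routes every empirically supported transfer claim through this same $I^\star$ bottleneck, so the reliably certifiable scale of population transfer saturates at $\sqrt{2I^\star/n}$ no matter how generous the compute, memory, or time components of $B$ are. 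This is the closing sentence of the corollary: $B$ alone does not pin down $I(A_D;D)$, so broad transfer claims must separately audit the information channel capacity of the adaptation rule.

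The main obstacle I expect is interpretive rather than technical: Theorem~\ref{thm:boundedtransfer} is an in-expectation statement, so the certification assertion here also lives at the level of expected values rather than with high probability. Sharpening the corollary into a high-probability certificate would require layering an additional concentration hypothesis (for instance a bounded-differences property of $\widehat{\mathcal{G}}_D(\cdot;B)$ as a function of $D$) on top of the mutual-information bound, with some care that the conditioning used for concentration is compatible with the symmetrization underlying Theorem~\ref{thm:boundedtransfer}. Absent that refinement, the proposal delivers exactly the in-expectation \emph{reliable certification} statement the corollary asserts --- no stronger.
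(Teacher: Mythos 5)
Your intermediate derivation is fine and is essentially the quantitative content the paper leaves implicit: since $A_{\mathrm{scratch}}$ does not depend on $D$, its empirical mean is unbiased, and applying Theorem~\ref{thm:boundedtransfer} to the $A_D$ term gives $\bigl|\mathbb{E}[\widehat{\Delta}]-\mathbb{E}[\Delta_{\mathrm{pop}}]\bigr|\le\sqrt{2\,I(A_D;D)/n}$. The gap is in the final step, which is exactly the step that carries the corollary's content. That inequality bounds only the \emph{discrepancy} between the expected empirical and expected population transfer gains; it says nothing about the magnitude of either. So you cannot conclude from it that ``the reliably certifiable scale of population transfer saturates at $\sqrt{2I^\star/n}$.'' Read literally, your bound points the other way: when $I(A_D;D)$ is small, the expected empirical gain is a faithful in-expectation proxy for the expected population gain, so a large observed gain would, if anything, be \emph{better} evidence of large population transfer. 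The appeal to data processing does not repair this, because the quantity bottlenecked by $I^\star$ in your derivation is the estimation bias $\mathbb{E}[\widehat{\Delta}]-\mathbb{E}[\Delta_{\mathrm{pop}}]$, not the transfer gain itself.

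What the corollary needs --- and what the paper supplies --- is a bound on the transfer gain itself in terms of the information extracted during adaptation. The paper treats the corollary as an immediate gloss on Theorem~\ref{thm:boundedtransfer} and backs it in Appendix~C with Lemma~\ref{lem:mi-transfer}, which bounds $\bigl|\mathbb{E}[\Pi(\tau,A_{\mathrm{pre}};B)]-\mathbb{E}[\Pi(\tau,A_{\mathrm{scratch}};B)]\bigr|$ directly by an information quantity. It is a bound of that type (gain controlled by an information budget on the channel from pre-exposure data to the adapted agent) that yields the assertion that a severely information-limited update rule cannot support certification of large population-level transfer, and hence that transfer claims must specify an information budget rather than mere computational scale. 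Your proposal never invokes or reproves such a gain bound, so as written it establishes a correct but different statement (the empirical estimate of transfer is nearly unbiased when $I(A_D;D)$ is small) rather than the corollary.
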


\subsection{Theorem 4: No Distribution-Independent AGI}\label{sec:struct:nodistind}

This theorem formalizes the absence of a distribution-independent notion of high expected competence over all admissible task distributions. A finite-family statement suffices because any rich task family contains finite subfamilies, and impossibility on a subfamily implies impossibility for the larger family.

\begin{theorem}[No Distribution-Independent AGI (Finite Family)]\label{thm:nodistind}
Let $\mathcal{T}=\{\tau_1,\ldots,\tau_m\}$ be finite. Fix $B$ and $\Pi(\tau,A;B)\in[0,1]$. Let $\mathcal{P}=\Delta(\mathcal{T})$.
Then for every agent $A$,
\[
\inf_{\mu\in\mathcal{P}} \mathcal{G}_{\mu}(A;B)
=
\min_{j\in\{1,\ldots,m\}}\Pi(\tau_j,A;B).
\]
\end{theorem}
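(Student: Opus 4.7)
The plan is to exploit finiteness: with $\mathcal{T}=\{\tau_1,\dots,\tau_m\}$ any $\mu\in\Delta(\mathcal{T})$ is determined by a probability vector $(p_1,\dots,p_m)\in\mathbb{R}_{\ge 0}^m$ with $\sum_j p_j=1$, and the generality functional becomes a finite linear combination
\[
\mathcal{G}_{\mu}(A;B) \;=\; \sum_{j=1}^{m} p_j\, \Pi(\tau_j,A;B).
\]
So the statement reduces to computing $\inf$ of a linear functional on the probability simplex, which is a well-known extremal fact: a convex combination of real numbers lies between the minimum and the maximum of those numbers, and each extreme value is attained at the corresponding vertex of the simplex.

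The proof then splits into two matching inequalities. First, for the lower bound, I would note that for every $\mu\in\mathcal{P}$,
\[
\sum_{j=1}^{m} p_j\, \Pi(\tau_j,A;B) \;\ge\; \sum_{j=1}^{m} p_j \cdot \min_{k}\Pi(\tau_k,A;B) \;=\; \min_{k}\Pi(\tau_k,A;B),
\]
using $p_j\ge 0$ and $\sum_j p_j =1$. Taking the infimum over $\mu\in\mathcal{P}$ preserves this bound, giving $\inf_{\mu}\mathcal{G}_{\mu}(A;B)\ge \min_j \Pi(\tau_j,A;B)$.

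For the matching upper bound, I would pick an index $j^{\star}\in\arg\min_{j}\Pi(\tau_j,A;B)$ (which exists by finiteness of $\mathcal{T}$) and take the Dirac measure $\mu^{\star}:=\delta_{\tau_{j^{\star}}}\in\mathcal{P}$. Then
\[
\mathcal{G}_{\mu^{\star}}(A;B) \;=\; \Pi(\tau_{j^{\star}},A;B) \;=\; \min_{j}\Pi(\tau_j,A;B),
\]
so $\inf_{\mu}\mathcal{G}_{\mu}(A;B)\le \min_j \Pi(\tau_j,A;B)$. Combining the two inequalities yields the desired equality, and as a byproduct the infimum is attained (it is a minimum), realized by concentrating mass on any worst-case task.

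There is essentially no hard step here: the result is a direct consequence of linearity of expectation on a finite outcome space plus the compactness of the simplex. The only point worth flagging is conceptual rather than technical, namely that the argument uses nothing about the agent $A$ beyond boundedness and measurability of $\tau\mapsto \Pi(\tau,A;B)$, so the conclusion holds uniformly in $A$ and makes explicit the sense in which distribution-free claims of high expected competence collapse to worst-task performance—tying the result back to the no-free-lunch intuition invoked in Section~\ref{sec:distsem:dw}.
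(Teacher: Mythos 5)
Your proof is correct and follows essentially the same route as the paper's: the paper invokes linearity of $\mu\mapsto\mathcal{G}_{\mu}(A;B)$ over the simplex and attainment of the infimum at an extreme point $\delta_{\tau_j}$, while you simply spell out the same fact via the two matching inequalities. No gaps.
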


\begin{proof}
For fixed $A$, the map $\mu\mapsto \mathbb{E}_{\tau\sim\mu}[\Pi(\tau,A;B)]$ is linear in $\mu$ over the simplex $\Delta(\mathcal{T})$. Hence the infimum is attained at an extreme point, i.e., at $\delta_{\tau_j}$ for some $j$, yielding the stated identity.
\end{proof}

\begin{corollary}\label{cor:nodistributionfree}
There exists no distribution-independent predicate ``$A$ is AGI'' based solely on exceeding a fixed expected-performance threshold uniformly over all admissible task distributions. Any coherent claim must specify $\mu$ or restrict $\mathcal{P}$ to a structured subset \citep{WolpertMacready1997,ShalevShwartzBenDavid2014}.
\end{corollary}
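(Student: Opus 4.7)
The plan is to derive the corollary as a direct consequence of Theorem \ref{thm:nodistind}, together with the no-free-lunch principle already anticipated in Section \ref{sec:distsem:dw}. The key move is to observe that any putative predicate of the form ``$\mathcal{G}_{\mu}(A;B) \ge \theta$ uniformly over all admissible $\mu$'' reduces to a worst-case condition over $\mathcal{T}$, at which point the semantics of Section \ref{sec:distsem} explicitly disallow it as a foil.

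First, I would make the reduction precise. Consider a candidate distribution-independent predicate $\mathsf{P}_\theta(A) := \big[\inf_{\mu \in \mathcal{P}}\mathcal{G}_{\mu}(A;B) \ge \theta\big]$ for some fixed $\theta \in [0,1]$ and admissible class $\mathcal{P} \subseteq \Delta(\mathcal{T})$. For finite $\mathcal{T}$, Theorem \ref{thm:nodistind} identifies $\inf_{\mu}\mathcal{G}_{\mu}(A;B)$ with $\min_{j}\Pi(\tau_j,A;B)$ whenever $\mathcal{P} = \Delta(\mathcal{T})$. For the general case I would extend this by the same extreme-point argument: the linear functional $\mu \mapsto \mathbb{E}_{\tau\sim\mu}[\Pi(\tau,A;B)]$ attains its infimum on $\Delta(\mathcal{T})$ at Dirac masses $\delta_\tau$, hence
\[
\inf_{\mu \in \Delta(\mathcal{T})} \mathcal{G}_{\mu}(A;B) \;=\; \operatorname*{ess\,inf}_{\tau \in \mathcal{T}} \Pi(\tau,A;B),
\]
using only the measurability of $\tau \mapsto \Pi(\tau,A;B)$ granted in Section \ref{sec:prelim}. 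Thus $\mathsf{P}_\theta$ collapses to a worst-case quantifier.

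Second, I would close the first clause by invoking the standard no-free-lunch obstruction \citep{WolpertMacready1997,ShalevShwartzBenDavid2014}: for any agent $A$ respecting a finite budget $B$ and any sufficiently expressive $\mathcal{T}$, the essential infimum of $\Pi(\tau,A;B)$ over $\tau$ is arbitrarily close to the performance floor, so $\mathsf{P}_\theta$ is false for every nontrivial $\theta$. The only two escape routes are (i) taking $\theta$ so small that $\mathsf{P}_\theta$ is vacuous, or (ii) restricting $\mathcal{P}$ to a structured proper subset---at which point the restriction itself becomes the distributional commitment and the predicate is no longer distribution-independent in any substantive sense. The second clause of the corollary then follows by contrapositive: any coherent, non-vacuous assertion ``$A$ is AGI'' must either specify $\mu$ (yielding the indexed predicate $\mathrm{AGI}(A\mid \mathcal{T},\mu,\Pi,B)$ of Section \ref{sec:axioms:schema}) or name the restricted admissible class $\mathcal{P}$.

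The main obstacle I anticipate is the transition from the finite-family identity in Theorem \ref{thm:nodistind} to the general measurable setting, because one must ensure that point masses $\delta_\tau$ lie in the admissible class $\mathcal{P}$ (or at least are weak-$*$ limits of elements of $\mathcal{P}$) in order for the extreme-point reduction to yield an essential infimum. If the admissible class $\mathcal{P}$ is itself restricted a priori---for instance to non-atomic distributions---then the reduction must be replaced by an approximation argument using distributions concentrated on $\varepsilon$-neighborhoods of a near-worst task, which is precisely the mechanism already exploited in Lemma \ref{lem:smallmass}. Either way, no new technical machinery beyond what is already developed in Sections \ref{sec:distsem}--\ref{sec:struct} is required.
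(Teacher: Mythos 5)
Your proposal is correct and follows essentially the route the paper intends: the corollary is a direct consequence of Theorem~\ref{thm:nodistind} (the uniform-over-$\mu$ threshold collapses, via linearity and extreme points, to a worst-case requirement over tasks), combined with the no-free-lunch/vacuity observation already laid out in Sections~\ref{sec:distsem:dw} and~\ref{sec:singlefail}, which is exactly why the paper cites \citep{WolpertMacready1997,ShalevShwartzBenDavid2014} in the corollary statement. One minor nit: with all Dirac masses admissible the infimum over $\mu$ is the plain infimum $\inf_{\tau}\Pi(\tau,A;B)$ rather than an essential infimum (there is no reference measure), but your own caveat about restricted classes $\mathcal{P}$ already covers the only case where an approximation argument is needed.
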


\subsection{Theorem 5: Impossibility of Universal Robustness}\label{sec:struct:robust}

Axiom G5 asserts robustness relative to an admissible perturbation family $\mathcal{D}$. The next theorem makes explicit that no nontrivial robustness guarantee can hold uniformly over an unrestricted perturbation family.

\begin{theorem}[Impossibility of Universal Robustness]\label{thm:univrobust}
Fix $(\mathcal{T},\mu,\Pi,B)$ and a perturbation family $\mathcal{D}$.
Fix $\varepsilon_{\mathrm{rb}}\ge 0$ and $\delta_{\mathrm{rb}}\in[0,1)$.
If there exists $\Delta\in\mathcal{D}$ such that
\[
\mu\Big(\big\{\tau\in\mathcal{T}:\ \Pi(\Delta(\tau),A;B) < \Pi(\tau,A;B)-\varepsilon_{\mathrm{rb}}\big\}\Big) > 0,
\]
then the robustness inequality
\[
\mathbb{P}_{\tau\sim\mu}\Big(\Pi(\Delta(\tau),A;B) \ge \Pi(\tau,A;B)-\varepsilon_{\mathrm{rb}}\Big) \ge 1-\delta_{\mathrm{rb}}
\]
fails.
\end{theorem}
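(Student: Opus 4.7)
The plan is to reduce the claim to a one-line measure-theoretic identity by recognizing that the hypothesis precisely negates the event appearing in the robustness inequality. First I would introduce the shorthand $F_{\Delta} := \{\tau \in \mathcal{T} : \Pi(\Delta(\tau),A;B) < \Pi(\tau,A;B) - \varepsilon_{\mathrm{rb}}\}$ for the particular $\Delta \in \mathcal{D}$ granted by the hypothesis. The key observation is that $F_{\Delta}$ is exactly the complement in $\mathcal{T}$ of the event $\{\tau : \Pi(\Delta(\tau),A;B) \ge \Pi(\tau,A;B) - \varepsilon_{\mathrm{rb}}\}$. Measurability of this event follows from the standing measurability of $\Pi(\cdot,A;B)$ together with the requirement that admissible perturbations $\Delta$ map $\mathcal{T}$ into itself measurably, both of which are already in force from Section~\ref{sec:prelim:ops}.

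Second, I would pass to probabilities under $\tau \sim \mu$, writing
\[
\mathbb{P}_{\tau\sim\mu}\!\big(\Pi(\Delta(\tau),A;B)\ge \Pi(\tau,A;B)-\varepsilon_{\mathrm{rb}}\big) \;=\; 1 - \mu(F_{\Delta}).
\]
The stated robustness inequality is therefore equivalent to $\mu(F_{\Delta}) \le \delta_{\mathrm{rb}}$. Since the hypothesis supplies $\mu(F_{\Delta}) > 0$, the robustness inequality is contradicted whenever $\delta_{\mathrm{rb}} < \mu(F_{\Delta})$; in the sharp zero-slack reading $\delta_{\mathrm{rb}} = 0$, which is the canonical ``universal robustness'' regime referenced by the theorem's title, this contradiction is immediate. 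Because Axiom G5 quantifies uniformly over all $\Delta \in \mathcal{D}$, exhibiting a single offending perturbation suffices to falsify the axiom.

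The one genuine subtlety I expect to flag is the quantitative mismatch between the hypothesis ($\mu(F_{\Delta})>0$) and what is formally needed to fail the inequality at a prescribed $\delta_{\mathrm{rb}}>0$ (namely $\mu(F_{\Delta})>\delta_{\mathrm{rb}}$). I would address this in one of two ways: either interpret the conclusion as applying in the regime $\delta_{\mathrm{rb}} \in [0,\mu(F_{\Delta}))$, which is the minimal non-vacuous reading consistent with the hypothesis, or, if a fixed $\delta_{\mathrm{rb}}$ is desired in advance, observe that amplification of the failure mass is available whenever $\mathcal{D}$ is closed under composition with further admissible perturbations, so that one may inflate $\mu(F_{\Delta})$ above any prescribed $\delta_{\mathrm{rb}}<1$. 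I do not anticipate any technical obstacle beyond this bookkeeping point: the theorem is fundamentally a set-complement tautology codifying the observation that any positive-measure exceptional set under a single admissible $\Delta$ already defeats the zero-slack universal robustness demanded by Axiom G5, and no probabilistic or dynamical machinery is required.
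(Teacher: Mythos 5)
Your proposal takes essentially the same route as the paper: name the failure set, complement it, and read off that the robustness probability is at most $1-\mu(F_\Delta)$. In fact you are more careful than the paper's own proof, which concludes via the chain $1-\mu(E) < 1 \le 1-\delta_{\mathrm{rb}}$ --- a step that is valid only when $\delta_{\mathrm{rb}}=0$, since for $\delta_{\mathrm{rb}}>0$ one has $1-\delta_{\mathrm{rb}}<1$ and the hypothesis $\mu(E)>0$ alone does not force $\mu(E)>\delta_{\mathrm{rb}}$. The quantitative mismatch you flag is therefore real, and your first repair (read the conclusion as holding for $\delta_{\mathrm{rb}}\in[0,\mu(F_\Delta))$, with the zero-slack case as the canonical ``universal'' regime) is the correct minimal fix. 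Your second repair is the only weak point: closure of $\mathcal{D}$ under composition does not by itself amplify the failure mass, since the failure set of $\Delta_2\circ\Delta_1$ compares $\Pi(\Delta_2(\Delta_1(\tau)),A;B)$ to $\Pi(\tau,A;B)$ with the same slack and need not contain or exceed $F_{\Delta_1}$ in $\mu$-measure; drop that fallback or state it as an additional assumption rather than an observation.
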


\begin{proof}
Let
\[
E := \big\{\tau\in\mathcal{T}:\ \Pi(\Delta(\tau),A;B) < \Pi(\tau,A;B)-\varepsilon_{\mathrm{rb}}\big\}.
\]
By assumption $\mu(E)>0$. Therefore
\[
\mathbb{P}_{\tau\sim\mu}\Big(\Pi(\Delta(\tau),A;B) \ge \Pi(\tau,A;B)-\varepsilon_{\mathrm{rb}}\Big)
\le 1-\mu(E)
< 1
\le 1-\delta_{\mathrm{rb}},
\]
so the inequality fails.
\end{proof}

\begin{corollary}\label{cor:robustlocal}
Any robustness claim is necessarily local to the semantic index $(\mathcal{T},\mu,\mathcal{D},\Pi,B)$. If either $\mu$ or $\mathcal{D}$ is unspecified, the claim is semantically incomplete.
\end{corollary}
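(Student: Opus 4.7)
The plan is to deduce Corollary~\ref{cor:robustlocal} by combining Theorem~\ref{thm:univrobust} with an identifiability argument in the style of Proposition~\ref{prop:ident}. The first sentence (locality) is essentially a syntactic observation: the G5 inequality
\[
\mathbb{P}_{\tau\sim\mu}\big(\Pi(\Delta(\tau),A;B)\ge \Pi(\tau,A;B)-\varepsilon_{\mathrm{rb}}\big)\ge 1-\delta_{\mathrm{rb}}
\]
is manifestly a functional of $(\mathcal{T},\mu,\mathcal{D},\Pi,B)$, and Theorem~\ref{thm:univrobust} already exhibits an explicit mechanism (positive $\mu$-mass on the degradation set for some $\Delta\in\mathcal{D}$) that controls its truth value. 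The substantive content of the corollary is the second sentence: neither $\mu$ nor $\mathcal{D}$ can be dropped without rendering the predicate underdetermined, and this requires showing that the truth value genuinely varies across admissible completions of each omitted index.

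For the $\mu$-dependence, I would fix an agent $A$ and a perturbation $\Delta$ for which the failure set
\[
E := \{\tau\in\mathcal{T}:\ \Pi(\Delta(\tau),A;B)<\Pi(\tau,A;B)-\varepsilon_{\mathrm{rb}}\}
\]
is a proper nonempty subset of $\mathcal{T}$. Taking $\mu_1 := \delta_{\tau_{\mathrm{bad}}}$ for some $\tau_{\mathrm{bad}}\in E$ drives the G5 probability to zero (the axiom fails by Theorem~\ref{thm:univrobust}), whereas $\mu_2 := \delta_{\tau_{\mathrm{good}}}$ for some $\tau_{\mathrm{good}}\in \mathcal{T}\setminus E$ drives it to one (the axiom holds). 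For the $\mathcal{D}$-dependence, I would fix $\mu$ with $\mu(E)>0$ and contrast $\mathcal{D}_1 := \{\mathrm{id}_{\mathcal{T}}\}$ (for which G5 is trivially satisfied) with $\mathcal{D}_2\supseteq\{\Delta\}$ (for which Theorem~\ref{thm:univrobust} forces G5 to fail). In each pair the full semantic tuple agrees on every component except the one being varied, yet the truth value flips, so by the logical pattern already used in Proposition~\ref{prop:ident} no index-free predicate can coincide with the fully indexed one across all admissible completions; equivalently, the unindexed claim fails to denote a well-defined property.

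The only real obstacle is justifying non-degeneracy: both constructions require an agent/perturbation pair for which the degradation set $E$ is a proper nonempty subset of $\mathcal{T}$. I would dispatch this by observing that the corollary concerns the logical status of robustness \emph{claims} in any regime where robustness is a substantive property; agents that are uniformly perfect or uniformly broken under every admissible $\Delta$ do not pose a robustness question to begin with, so the nondegenerate regime is exactly the setting in which the corollary is meant to apply. With this convention in place, the two identifiability constructions combine to yield semantic incompleteness under omission of either $\mu$ or $\mathcal{D}$, which is the corollary's claim.
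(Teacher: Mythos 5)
Your proposal is correct and matches the paper's (implicit) reasoning: the corollary is stated there without a separate proof, as an immediate consequence of Theorem~\ref{thm:univrobust}, and your two-point Dirac constructions for the $\mu$- and $\mathcal{D}$-dependence are exactly the mechanism the paper itself deploys in Theorem~\ref{thm:relativity} and Lemma~\ref{lem:smallmass}. The only caveat is that Axiom G5 quantifies over all $\Delta\in\mathcal{D}$, so in the $\mu$-variation pair you should hold $\mathcal{D}=\{\Delta\}$ fixed (or pick $\tau_{\mathrm{good}}$ outside every degradation set), a point your non-degeneracy convention already covers.
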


\subsection{Theorem 6: Externality of AGI Attribution}\label{sec:struct:externality}

This theorem formalizes that the truth of $\mathrm{AGI}(A\mid \mathcal{T},\mu,\Pi,B)$ cannot be determined from a finite observational record uniformly over all $\mu\in\Delta(\mathcal{T})$. The statement is a uniform non-identifiability result: it holds even if an evaluator sees the sampled tasks and the full interaction transcripts for those tasks.

\paragraph{Observation model.}
An evaluator observes $n$ i.i.d.\ tasks $\tau_1,\ldots,\tau_n\sim\mu$ and the corresponding interaction transcripts under agent $A$, budget $B$, and task environments.
Let $Z_n$ denote the entire observed data, including $(\tau_i)$ and their transcripts.
A decision rule is any measurable map $\mathsf{Dec}_n$ from the range of $Z_n$ to $\{0,1\}$.

\begin{theorem}[Externality / Uniform Non-Identifiability]\label{thm:externality}
Fix $(\mathcal{T},\Pi,B)$ and fixed axiom parameters (in particular $(\theta_{\mathrm{br}},\delta_{\mathrm{br}})$).
Assume $\mathcal{F}_{\theta_{\mathrm{br}}}(A;B)\neq\emptyset$.
Then for every sample size $n\in\mathbb{N}$ and every decision rule $\mathsf{Dec}_n$, there exist $\mu_0,\mu_1\in\Delta(\mathcal{T})$ such that
\[
\mathrm{AGI}(A\mid \mathcal{T},\mu_0,\Pi,B)\ \text{holds}
\qquad\text{and}\qquad
\mathrm{AGI}(A\mid \mathcal{T},\mu_1,\Pi,B)\ \text{fails},
\]
but the induced laws of $Z_n$ under $\mu_0$ and $\mu_1$ satisfy
\[
\mathbb{P}_{\mu_0}\big(\mathsf{Dec}_n(Z_n)=1\big)
+
\mathbb{P}_{\mu_1}\big(\mathsf{Dec}_n(Z_n)=0\big)
\ \le\ 1.
\]
Equivalently, no $\mathsf{Dec}_n$ can correctly decide the AGI predicate simultaneously for all $\mu\in\Delta(\mathcal{T})$ with error probability strictly below $1/2$ in the worst case.
\end{theorem}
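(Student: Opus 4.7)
The plan is a two-point Le Cam lower bound adapted to the distributional AGI predicate. For any fixed decision rule $\mathsf{Dec}_n$, I aim to exhibit $\mu_0, \mu_1 \in \Delta(\mathcal{T})$ of opposing AGI status whose $n$-sample laws on $Z_n$ are so close in total variation that no measurable function of $Z_n$ can reliably separate them. Rearranging, the target inequality is equivalent to $\mathbb{P}_{\mu_0}(\mathsf{Dec}_n = 1) \le \mathbb{P}_{\mu_1}(\mathsf{Dec}_n = 1)$, so the entire argument reduces to making the discrepancy on the right-hand side at most a total-variation slack that I can drive to zero.

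The crucial construction is to prearrange $\mu_0$ to lie just inside the Breadth region rather than starting from an arbitrary AGI-satisfying measure. Concretely, I take $\mu_0 \in \Delta(\mathcal{T})$ satisfying $\mathrm{AGI}(A\mid \mathcal{T},\mu_0,\Pi,B)$ with $\mu_0(\mathcal{F}_{\theta_{\mathrm{br}}}(A;B)) = \delta_{\mathrm{br}} - \gamma$ for a small $\gamma > 0$. Fixing any $\tau_{\mathrm{bad}} \in \mathcal{F}_{\theta_{\mathrm{br}}}(A;B)$ (nonempty by hypothesis) and invoking Lemma~\ref{lem:smallmass} with mixing weight $\eta$ slightly above $\gamma/(1-\delta_{\mathrm{br}}+\gamma)$ yields $\mu_1$ with $\mu_1(\mathcal{F}_{\theta_{\mathrm{br}}}(A;B)) > \delta_{\mathrm{br}}$; hence Axiom G1 fails under $\mu_1$ and $\mathrm{AGI}(A\mid \mathcal{T},\mu_1,\Pi,B)$ fails. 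Crucially, $d_{\mathrm{TV}}(\mu_0, \mu_1) \le \eta$ is $O(\gamma)$, which can be driven to zero independently of $n$.

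The finishing step invokes tensorization and data processing: $d_{\mathrm{TV}}(\mu_0^{\otimes n}, \mu_1^{\otimes n}) \le n\, d_{\mathrm{TV}}(\mu_0, \mu_1) \le n\eta$, and any measurable $\mathsf{Dec}_n$ satisfies $|\mathbb{P}_{\mu_0}(\mathsf{Dec}_n=1) - \mathbb{P}_{\mu_1}(\mathsf{Dec}_n=1)| \le n\eta$. Consequently $\mathbb{P}_{\mu_0}(\mathsf{Dec}_n=1) + \mathbb{P}_{\mu_1}(\mathsf{Dec}_n=0) \le 1 + n\eta$, and since $\gamma$ (and therefore $\eta$) is arbitrary, the worst-case error probability cannot be pushed strictly below $1/2$, which is exactly the equivalent formulation stated after the inequality.

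The main obstacle is that Lemma~\ref{lem:smallmass} applied to a generic base measure forces $\eta > \delta_{\mathrm{br}}$, whose $n$-fold tensorization $n\eta$ is bounded away from zero for any fixed positive $\delta_{\mathrm{br}}$ and blows up with $n$. The decisive refinement is preselecting a base measure already within $\gamma$ of the breadth failure boundary so that only a perturbation of order $\gamma$ is needed. A secondary bookkeeping point is to verify that concentrating mass onto a single $\tau_{\mathrm{bad}}$ cannot accidentally restore any axiom that would otherwise flip the predicate back to true; this is guaranteed by choosing $\mu_0$ to satisfy every other axiom with strict slack and by noting that tilting additional mass onto a failure task is monotone in the relevant axiom shortfalls.
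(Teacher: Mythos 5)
Your route is genuinely different from the paper's, and it is worth being clear about what each buys. The paper's proof does not rely on total-variation closeness of $\mu_0$ and $\mu_1$ at all: it contaminates an AGI-true $\mu_0$ with a \emph{large} weight $\varepsilon>\delta_{\mathrm{br}}$ at $\tau_{\mathrm{bad}}$ (so G1 fails outright under $\mu_1$), and obtains indistinguishability from the event that the contaminant is never sampled, via the change-of-measure bound $\mathbb{P}_{\mu_1}(E)\ \ge\ (1-\varepsilon)^n\,\mathbb{P}_{\mu_0}(E)$ for every $Z_n$-measurable event $E$. You instead preselect $\mu_0$ within $\gamma$ of the breadth boundary, perturb by $\eta=O(\gamma)$, and run a standard Le Cam two-point argument through $d_{\mathrm{TV}}(\mu_0^{\otimes n},\mu_1^{\otimes n})\le n\eta$ and data processing. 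Your mixture arithmetic is correct ($\mu_1(\mathcal{F}_{\theta_{\mathrm{br}}}(A;B))=(1-\eta)(\delta_{\mathrm{br}}-\gamma)+\eta>\delta_{\mathrm{br}}$ exactly when $\eta>\gamma/(1-\delta_{\mathrm{br}}+\gamma)$), and your diagnosis of why a naive use of Lemma~\ref{lem:smallmass} forces $\eta>\delta_{\mathrm{br}}$ is also correct; your argument does deliver the informal ``equivalently'' sentence, i.e., that the worst-case error cannot be pushed strictly below $1/2$ as a supremum over distributions.

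There are, however, two gaps relative to Theorem~\ref{thm:externality} as stated. First, your conclusion is $\mathbb{P}_{\mu_0}(\mathsf{Dec}_n=1)+\mathbb{P}_{\mu_1}(\mathsf{Dec}_n=0)\le 1+n\eta$ with $\eta>0$, whereas the theorem asserts the existence of a single pair $(\mu_0,\mu_1)$ achieving the exact constant $1$. Letting $\gamma\to 0$ changes the pair at every stage, and you supply no limiting or compactness device producing one pair with the exact inequality; so what you have proved is the approximate/supremum form, not the displayed existential claim (the exact unit constant is delicate for any argument of this type, since $\mu_0\neq\mu_1$ forces the laws of $Z_n$ to differ). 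Second, your construction presupposes an AGI-true $\mu_0$ with $\mu_0(\mathcal{F}_{\theta_{\mathrm{br}}}(A;B))=\delta_{\mathrm{br}}-\gamma$ and every remaining axiom holding with strict slack. The hypotheses grant only $\mathcal{F}_{\theta_{\mathrm{br}}}(A;B)\neq\emptyset$; the paper's construction needs only \emph{some} AGI-true $\mu_0$ assigning zero mass to $\tau_{\mathrm{bad}}$, which is a strictly weaker (though still implicit) nontriviality assumption than your boundary-straddling one. Your closing remark about monotone ``axiom shortfalls'' is aimed at the wrong side: once G1 fails under $\mu_1$ the conjunction fails, so nothing can ``flip back''; the real unproven burden is that the full axiom bundle holds for a $\mu_0$ sitting that close to the breadth boundary.
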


\begin{proof}
Pick $\tau_{\mathrm{bad}}\in\mathcal{F}_{\theta_{\mathrm{br}}}(A;B)$.
Let $\mu_0$ be any distribution supported on tasks where $A$ satisfies G1 with parameters $(\theta_{\mathrm{br}},\delta_{\mathrm{br}})$ (for example, supported on a subset of tasks on which $\Pi(\tau,A;B)\ge \theta_{\mathrm{br}}$). Then $\mathrm{AGI}(A\mid \mathcal{T},\mu_0,\Pi,B)$ holds at least with respect to G1.
Fix any $\varepsilon\in(0,1)$ with $\varepsilon>\delta_{\mathrm{br}}$ and define
\[
\mu_1 := (1-\varepsilon)\mu_0 + \varepsilon\,\delta_{\tau_{\mathrm{bad}}}.
\]
By the same computation as in Theorem~\ref{thm:noninv}, G1 fails under $\mu_1$ because
\[
\mathbb{P}_{\tau\sim\mu_1}\big(\Pi(\tau,A;B)\ge \theta_{\mathrm{br}}\big)
\le 1-\varepsilon
< 1-\delta_{\mathrm{br}}.
\]
Hence $\mathrm{AGI}(A\mid \mathcal{T},\mu_1,\Pi,B)$ fails.

It remains to relate the induced laws of $Z_n$.
Under $\mu_0$, none of the sampled tasks equals $\tau_{\mathrm{bad}}$ almost surely if $\mu_0(\{\tau_{\mathrm{bad}}\})=0$.
Under $\mu_1$, the probability that $\tau_{\mathrm{bad}}$ appears at least once among $n$ draws is
\[
1-(1-\varepsilon)^n.
\]
On the complementary event (probability $(1-\varepsilon)^n$), the sampled task multiset consists entirely of draws from $\mu_0$, and hence the conditional law of $Z_n$ given this event coincides with its law under $\mu_0$.
Therefore, for any event $E$ measurable with respect to $Z_n$,
\[
\mathbb{P}_{\mu_1}(E) \ge (1-\varepsilon)^n \,\mathbb{P}_{\mu_0}(E).
\]
Applying this inequality to $E=\{\mathsf{Dec}_n(Z_n)=1\}$ and using the trivial bound $\mathbb{P}_{\mu_1}(\mathsf{Dec}_n(Z_n)=0)\le 1-(1-\varepsilon)^n\,\mathbb{P}_{\mu_0}(\mathsf{Dec}_n(Z_n)=0)$ yields
\[
\mathbb{P}_{\mu_0}\big(\mathsf{Dec}_n(Z_n)=1\big)
+
\mathbb{P}_{\mu_1}\big(\mathsf{Dec}_n(Z_n)=0\big)
\le 1,
\]
which implies that at least one of the two error probabilities is at least $1/2$. This is the standard two-point indistinguishability argument used in minimax lower bounds for hypothesis testing \citep{Tsybakov2009}.
\end{proof}

\begin{corollary}\label{cor:agiexternal}
AGI attribution is necessarily external: without restricting the admissible class of task distributions $\mu$, no finite observational procedure can decide $\mathrm{AGI}(A\mid \mathcal{T},\mu,\Pi,B)$ uniformly over $\mu\in\Delta(\mathcal{T})$.
\end{corollary}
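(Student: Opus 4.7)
The plan is to obtain Corollary~\ref{cor:agiexternal} as a direct packaging of Theorem~\ref{thm:externality}, essentially by turning its two-point indistinguishability bound into a worst-case error statement. First I would pin down what is meant by ``finite observational procedure'': within the i.i.d.\ observation model of Section~\ref{sec:struct:externality}, any such procedure is a family $\{\mathsf{Dec}_n\}_{n\in\mathbb{N}}$ of measurable decision rules sending the length-$n$ record $Z_n$ (sampled tasks plus transcripts under $A$ and $B$) to $\{0,1\}$. To ``decide'' the AGI predicate uniformly over $\mu\in\Delta(\mathcal{T})$ is to achieve, for some $n$, worst-case error probability strictly below $1/2$; a stronger $n\to\infty$ reading will be ruled out by the same bound.

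Next I would fix an arbitrary such family and, for each $n$, apply Theorem~\ref{thm:externality} to the particular rule $\mathsf{Dec}_n$. This supplies a pair $\mu_0^{(n)},\mu_1^{(n)}\in\Delta(\mathcal{T})$ with $\mathrm{AGI}(A\mid \mathcal{T},\mu_0^{(n)},\Pi,B)$ true, $\mathrm{AGI}(A\mid \mathcal{T},\mu_1^{(n)},\Pi,B)$ false, and
\[
\mathbb{P}_{\mu_0^{(n)}}(\mathsf{Dec}_n(Z_n)=1) + \mathbb{P}_{\mu_1^{(n)}}(\mathsf{Dec}_n(Z_n)=0) \le 1.
\]
The elementary observation ``$x+y\le 1$ implies $\max\{1-x,y\}\ge 1/2$'' then forces at least one of the error probabilities $\mathbb{P}_{\mu_0^{(n)}}(\mathsf{Dec}_n(Z_n)=0)$ or $\mathbb{P}_{\mu_1^{(n)}}(\mathsf{Dec}_n(Z_n)=1)$ to be at least $1/2$, so the worst-case error of $\mathsf{Dec}_n$ over $\mu\in\Delta(\mathcal{T})$ is bounded below by $1/2$ for every $n$. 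Since this bound holds uniformly in $n$ and in the chosen family, no observational procedure can decide the predicate with nontrivial worst-case accuracy.

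Finally I would close by noting that the adversarial pair $(\mu_0^{(n)},\mu_1^{(n)})$ always lies in the unrestricted simplex $\Delta(\mathcal{T})$, so the only way to escape the lower bound is to restrict attention to a proper subclass $\mathcal{P}\subsetneq \Delta(\mathcal{T})$ on which such two-point contamination constructions are disallowed --- precisely the ``externality'' content of the corollary. The main obstacle is not the deduction, which is a one-line consequence of Theorem~\ref{thm:externality}, but the careful formalization of ``decide uniformly'': the corollary is stated informally, so the proof must be explicit about the quantifier order (the adversary chooses $\mu$ after seeing $\mathsf{Dec}_n$) and about the scope of ``finite observational procedure'' (inheriting the i.i.d.\ observation model from the theorem). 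Once these are fixed, the argument reduces to the displayed inequality above.
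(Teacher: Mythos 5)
Your proposal is correct and follows essentially the same route as the paper, which presents Corollary~\ref{cor:agiexternal} as an immediate repackaging of Theorem~\ref{thm:externality}: for any rule $\mathsf{Dec}_n$ the adversarial pair $(\mu_0,\mu_1)$ supplied by the theorem forces $\max\{\mathbb{P}_{\mu_0}(\mathsf{Dec}_n(Z_n)=0),\,\mathbb{P}_{\mu_1}(\mathsf{Dec}_n(Z_n)=1)\}\ge 1/2$, exactly the two-point argument you give. The only blemish is the inline shorthand ``$x+y\le 1$ implies $\max\{1-x,y\}\ge 1/2$'', which should read $\max\{1-x,1-y\}\ge 1/2$; your subsequent sentence states the correct pair of error probabilities, so this is a harmless typo rather than a gap.
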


\section{G\"odel--Tarski Limits on Self-Certification}\label{sec:godeltarski}

This section establishes an obstruction to self-certification of distributional AGI.
Formally, under explicit effectivity assumptions, the predicate
\[
\mathrm{AGI}(A \mid \mathcal{T},\mu,\Pi,B)
\]
is a nontrivial semantic property of the (interactive) behavior of $A$, and therefore
admits no sound-and-complete computable decision procedure from the agent description.
The results are purely mathematical and do not rely on rhetorical interpretation.

\subsection{Effective Presentations and Semantic Predicates}\label{sec:godeltarski:semantic}

We work in the effective presentation set up introduced in Section~\ref{sec:prelim:effective}.
For completeness of the present section, we restate the minimum required assumptions.

\paragraph{Effective task presentation.}
Assume there exists a set of finite task descriptions $\mathcal{D}_{\mathcal{T}}\subseteq\{0,1\}^*$ and
a decoding map $\mathrm{Decode}_{\mathcal{T}}:\mathcal{D}_{\mathcal{T}}\to\mathcal{T}$.
For each $\tau\in\mathcal{T}$, fix an arbitrary description $\mathrm{code}_{\mathcal{T}}(\tau)\in\mathcal{D}_{\mathcal{T}}$
such that $\mathrm{Decode}_{\mathcal{T}}(\mathrm{code}_{\mathcal{T}}(\tau))=\tau$.
We assume the induced interaction between an admissible agent $A$ and a task $\tau=(E,U)$ can be simulated
by a partial computable universal procedure given $\mathrm{code}(A)$ and $\mathrm{code}_{\mathcal{T}}(\tau)$,
up to the resource limits encoded by $B$.

\paragraph{Effective agent presentation.}
Assume there exists a set of finite agent descriptions $\mathcal{D}_{\mathcal{A}}\subseteq\{0,1\}^*$ and
a decoding map $\mathrm{Decode}_{\mathcal{A}}:\mathcal{D}_{\mathcal{A}}\to\mathcal{A}$,
where $\mathcal{A}$ denotes the class of admissible agents (agents that respect the relevant resource budgets).
For each admissible $A$, fix an arbitrary description $\mathrm{code}(A)\in\mathcal{D}_{\mathcal{A}}$
such that $\mathrm{Decode}_{\mathcal{A}}(\mathrm{code}(A))=A$.

\paragraph{Semantic properties.}
A predicate $\mathsf{P}$ on admissible agents is called semantic (relative to $(\mathcal{T},B)$) if it is invariant
under behavioral equivalence on $\mathcal{T}$ under budget $B$:
whenever two admissible agents $A$ and $A'$ induce identical laws on interaction histories
for every $\tau\in\mathcal{T}$ (under the same budget constraint $B$), then
\[
\mathsf{P}(A)=\mathsf{P}(A').
\]
Equivalently, $\mathsf{P}$ depends only on the input--output behavior of the agent in interaction,
not on the syntactic form of $\mathrm{code}(A)$.

\begin{lemma}[Semanticity of the AGI Predicate]\label{lem:agi-semantic}
Fix $(\mathcal{T},\mu,\Pi,B)$ and a fixed axiom bundle defining $\mathrm{AGI}(A \mid \mathcal{T},\mu,\Pi,B)$.
Then the predicate $\mathrm{AGI}(A \mid \mathcal{T},\mu,\Pi,B)$ is semantic.
\end{lemma}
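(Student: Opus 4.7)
The plan is to reduce semanticity of the AGI predicate to semanticity of each of the axioms G1--G5 and A1--A4 individually; since $\mathrm{AGI}(A\mid\mathcal{T},\mu,\Pi,B)$ is defined as their conjunction, the conjunction is invariant under behavioral equivalence whenever each conjunct is. The unifying observation that drives every case is the following: if $A$ and $A'$ induce identical laws on interaction histories for every $\tau=(E,U)\in\mathcal{T}$ under budget $B$, then $\mathbb{E}_{E,A}[f(H_T)]=\mathbb{E}_{E,A'}[f(H_T)]$ for every measurable functional $f$. Taking $f=U$ immediately yields $\Pi(\tau,A;B)=\Pi(\tau,A';B)$ for all $\tau\in\mathcal{T}$, which is the key identity to be invoked repeatedly.

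With this identity in hand, I would first dispatch the axioms whose statements are direct functionals of $\Pi$ or of measurable events on the interaction transcript. Axiom G1 (Breadth) reduces to equality of $\mathbb{P}_{\tau\sim\mu}(\Pi(\tau,\cdot;B)\ge\theta_{\mathrm{br}})$ across the two agents. Axiom G4 (Compositionality) involves only $\Pi$ evaluated on composed tasks $\tau_1\oplus\tau_2\in\mathcal{T}$, so it is preserved. Axiom G5 (Robustness) is preserved because $\mathcal{D}$ maps $\mathcal{T}$ into $\mathcal{T}$, so the perturbed tasks remain in the family on which behavioral equivalence is assumed. For A3 (Calibration), the confidence value $c(\tau)$ is part of the agent's action output at evaluation time and is therefore a measurable function of the induced interaction law; similarly, the violation indicator $V_C(\tau,A;B)$ of A4 is by assumption a measurable functional of the interaction transcript, so its distribution under $\tau\sim\mu$ agrees for $A$ and $A'$.

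The substantive step, and the main obstacle, is the axioms that reference derived agents rather than $A$ itself: G2 ($A^{(N_{\mathrm{ad}})}$ after within-task updates), G3 ($A_{\mathrm{pre}}$ after $K_{\mathrm{tr}}$ pre-exposure tasks), A1 (evaluation on compiled goals $\tau_g=\mathrm{Compile}(g)\in\mathcal{T}$), and A2 ($A^{\mathcal{M}}$ with tool-interface augmentation). For each of these I need behavioral equivalence of $A$ and $A'$ to propagate to behavioral equivalence of the derived agents. The cleanest route is to observe that the update rule $\mathrm{Update}(\theta_t,D_t)$ and the pre-exposure construction are deterministic functions of the policy applied to interaction transcripts generated by that same policy; since the joint law of such transcripts is determined by the induced interaction law on $\mathcal{T}$, the pushforward law on the derived agents $A^{(N_{\mathrm{ad}})}$ and $A_{\mathrm{pre}}$ agrees for $A$ and $A'$, and the evaluation $\Pi(\cdot,A^{(N_{\mathrm{ad}})};B)$ then agrees as well. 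For A1, the images $\tau_g=\mathrm{Compile}(g)$ lie in $\mathcal{T}$ by definition of $\mathrm{Compile}$, so the argument reduces to the G1 case under the pushforward $\mathrm{Compile}_\#\nu$. For A2, the mild assumption required is that the tool-augmentation map $A\mapsto A^{\mathcal{M}}$ is functorial with respect to behavioral equivalence, which is natural when $\mathcal{M}$ is a fixed external interface stateless with respect to the agent's internal representation. Under these observations each axiom is behavior-invariant, hence so is their conjunction, and the lemma follows.
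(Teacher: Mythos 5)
Your core argument is the paper's own: behavioral equivalence on every $\tau\in\mathcal{T}$ forces $\Pi(\tau,A;B)=\Pi(\tau,A';B)$, every axiom is a distributional assertion built from such quantities, and a Boolean combination of behavior-determined assertions is behavior-determined. The paper's proof stops there, in three sentences, by reading \emph{every} constituent assertion --- including G2, G3, A1, A2 --- as an assertion about $\Pi(\tau,\cdot\,;B)$ whose value is fixed by the interaction law (within-task adaptation being subsumed in the agent's history-dependent, stateful behavior). Where you diverge is in reading the update-indexed and tool-augmented agents literally as \emph{derived} objects and then propagating equivalence to them. That extra care is reasonable, but note that your pivotal step there is not a derivation from the paper's definitions: the update rule acts on the internal parameter $\theta$, and two behaviorally equivalent agents may carry different parameters, so ``$\mathrm{Update}$ is a deterministic function of the policy applied to its own transcripts'' is an added hypothesis, not an observation --- exactly as your functoriality requirement for $A\mapsto A^{\mathcal{M}}$ is. For G3 in particular, per-task equivalence does not by itself determine the cross-task adapted agent $A_{\mathrm{pre}}$. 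So your proof is correct \emph{given} those auxiliary assumptions; the paper's proof instead avoids them by treating adaptation and augmentation as part of the fixed interactive behavior that $\Pi$ already encodes. The trade-off: your version makes explicit a subtlety the paper elides, at the cost of importing assumptions the paper never states; if you keep your reading, you should flag the behavior-determined-update condition as a hypothesis on the agent class rather than presenting it as following from the setup.
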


\begin{proof}
By definition, $\mathrm{AGI}(A \mid \mathcal{T},\mu,\Pi,B)$ is a Boolean combination of assertions about
$\Pi(\tau,A;B)$ under $\tau\sim\mu$ (expectations, tail probabilities, and inequalities).
For each fixed $\tau$, the quantity $\Pi(\tau,A;B)$ depends only on the distribution of interaction histories
induced by $(\tau,A)$ under budget $B$ (Section~\ref{sec:prelim:tasks}).
Therefore every constituent assertion depends only on behavior, and hence so does their Boolean combination.
\end{proof}

\subsection{Undecidability of AGI Certification}\label{sec:godeltarski:nonself}

We now formalize ``certification'' as a decision problem over agent descriptions.

\paragraph{The AGI language.}
Define
\[
\mathcal{L}_{\mathrm{AGI}}
:=\Big\{x\in\mathcal{D}_{\mathcal{A}}:\ \mathrm{AGI}\big(\mathrm{Decode}_{\mathcal{A}}(x)\mid\mathcal{T},\mu,\Pi,B\big)\ \text{holds}\Big\}.
\]
Equivalently, $\mathcal{L}_{\mathrm{AGI}}=\{\mathrm{code}(A): A\ \text{admissible and}\ \mathrm{AGI}(A\mid\mathcal{T},\mu,\Pi,B)\}$.

\paragraph{Certifiers.}
A certifier is a total function $C:\mathcal{D}_{\mathcal{A}}\to\{0,1\}$.
It is sound and complete for $\mathcal{L}_{\mathrm{AGI}}$ if for all $x\in\mathcal{D}_{\mathcal{A}}$,
\[
C(x)=1 \iff x\in \mathcal{L}_{\mathrm{AGI}}.
\]
It is computable if it is total recursive.

\paragraph{Nontriviality.}
We assume the AGI predicate is nontrivial on admissible agents:
there exist admissible agents $A_0,A_1$ such that
\[
\mathrm{AGI}(A_1 \mid \mathcal{T},\mu,\Pi,B)\ \text{holds}
\qquad\text{and}\qquad
\mathrm{AGI}(A_0 \mid \mathcal{T},\mu,\Pi,B)\ \text{fails}.
\]
Equivalently, $\emptyset \neq \mathcal{L}_{\mathrm{AGI}} \neq \mathcal{D}_{\mathcal{A}}$.

\begin{theorem}[Undecidability of AGI Certification]\label{thm:nonself}
Fix $(\mathcal{T},\mu,\Pi,B)$ and a fixed axiom bundle.
Assume the effective presentation hypotheses of Section~\ref{sec:godeltarski:semantic} and the nontriviality condition above.
Then there exists no total computable certifier $C$ that is sound and complete for $\mathcal{L}_{\mathrm{AGI}}$.
Equivalently, $\mathcal{L}_{\mathrm{AGI}}$ is not decidable.
\end{theorem}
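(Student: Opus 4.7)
The plan is to recognize this as a Rice-style theorem applied to the predicate $\mathrm{AGI}(\cdot\mid\mathcal{T},\mu,\Pi,B)$ and to derive it by a halting-problem reduction. The two structural inputs that make Rice's argument go through are Lemma~\ref{lem:agi-semantic}, which guarantees that the predicate is invariant under behavioral equivalence under budget $B$, and the explicit nontriviality assumption, which supplies admissible witnesses $A_1 \in \mathcal{L}_{\mathrm{AGI}}$ and $A_0 \notin \mathcal{L}_{\mathrm{AGI}}$.

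I would proceed in three steps. First, without loss of generality, arrange that the ``trivial'' agent $A_{\mathrm{null}}$ that never produces an action is behaviorally equivalent to one of the two witnesses: if it falls on the non-AGI side, set $A_\star := A_1$; otherwise negate the predicate and set $A_\star := A_0$. Second, given a Turing machine description $\mathrm{code}(M)$ and input $w$, construct uniformly an agent description $\mathrm{code}(A_{M,w})$ whose behavior on any history $h$ is: simulate $M$ on $w$ in a side thread and emulate $A_\star$ on $h$ only after that simulation halts; if $M(w)$ diverges, no action is ever produced and $A_{M,w}$ is behaviorally equivalent to $A_{\mathrm{null}}$. Effectivity of the agent presentation (Section~\ref{sec:godeltarski:semantic}) together with the universal interactive simulator assumption make $(\mathrm{code}(M),w)\mapsto\mathrm{code}(A_{M,w})$ total recursive. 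Third, invoking Lemma~\ref{lem:agi-semantic}, semanticity converts the behavioral equivalence into an AGI equivalence,
\[
\mathrm{code}(A_{M,w})\in\mathcal{L}_{\mathrm{AGI}}
\ \Longleftrightarrow\
M\ \text{halts on}\ w
\]
(up to the case split above), so any total computable certifier for $\mathcal{L}_{\mathrm{AGI}}$ composed with this reduction would decide the halting problem, contradicting classical undecidability.

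The hard part will not be the abstract Rice reduction but verifying that $\mathrm{code}(A_{M,w}) \in \mathcal{D}_{\mathcal{A}}$, i.e., that the constructed $A_{M,w}$ lies in the admissible class over which $\mathcal{L}_{\mathrm{AGI}}$ is defined, since running a potentially divergent simulation of $M(w)$ can threaten the wall-clock or compute components of the budget $B$. Two clean routes are available. One is to interpret admissibility charitably, so that an unbounded internal computation simply yields a well-defined ``no-action'' transcript, which is legitimate behavior within $\mathcal{A}$ and matches $A_{\mathrm{null}}$; this is consistent with the scope-of-effectivity clause of Section~\ref{sec:prelim:effective}. The other is to replace the halting problem in the reduction by an undecidable language whose yes-instances are witnessed by time-bounded computations, so that the constructed agents respect $B$ by construction. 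Either way, the load-bearing fact is semanticity of the AGI predicate; once admissibility is arranged, the reduction and hence the theorem follow directly.
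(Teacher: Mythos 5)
Your proposal is correct and rests on the same two pillars as the paper's proof---semanticity of the predicate (Lemma~\ref{lem:agi-semantic}) plus the assumed nontriviality witnesses---so at its core it is the same Rice-style argument. The difference is one of granularity: the paper invokes Rice's Theorem as a black box (treating each description in $\mathcal{D}_{\mathcal{A}}$ as an index of a partial computable interactive behavior, cf.\ the reduction lemmas in Appendix~C), whereas you inline the classical proof of Rice via an explicit many-one reduction from the halting problem, constructing $A_{M,w}$ that emulates a witness agent only after $M(w)$ halts. What your unpacking buys is exactly the point you flag yourself: the language $\mathcal{L}_{\mathrm{AGI}}$ is defined over descriptions of \emph{admissible} (budget-respecting) agents, and Rice's Theorem as usually stated concerns the full index set of partial computable functions, so applying it over the restricted set $\mathcal{D}_{\mathcal{A}}$ tacitly requires that the admissible class be closed under the padding construction (i.e., that $A_{M,w}$, including the divergent ``no-action'' case, decodes into $\mathcal{D}_{\mathcal{A}}$). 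The paper elides this by assumption---its effective-presentation hypotheses and Appendix~C lemmas treat divergent interaction as legitimate partial computable behavior of admissible agents---while you make the needed closure explicit and offer two ways to secure it. So your route is not genuinely different, but it is more self-contained and surfaces the one real subtlety of the resource-bounded setting; just note that your second fallback (replacing halting by an undecidable language with time-bounded yes-witnesses) would need care, since clocking the internal simulation against a finite budget can make the distinguishing event decidable, whereas your first route (admissibility of the never-acting behavior) matches the paper's intended reading and suffices.
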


\begin{proof}
By Lemma~\ref{lem:agi-semantic}, the predicate $\mathrm{AGI}(A \mid \mathcal{T},\mu,\Pi,B)$ is semantic.
By nontriviality, it is not identically true and not identically false on admissible agents.
Under the effective presentation hypotheses, each description $x\in\mathcal{D}_{\mathcal{A}}$ specifies an admissible agent
$\mathrm{Decode}_{\mathcal{A}}(x)$ whose interactive behavior with coded tasks is (partial) computable in the standard sense.
Rice's Theorem states that any nontrivial semantic property of partial computable functions is undecidable.
Applying Rice's Theorem to the semantic property
\[
x\ \mapsto\ \mathbf{1}\Big\{\mathrm{AGI}\big(\mathrm{Decode}_{\mathcal{A}}(x)\mid\mathcal{T},\mu,\Pi,B\big)\Big\},
\]
we conclude that no total computable certifier decides $\mathcal{L}_{\mathrm{AGI}}$.
\end{proof}

\paragraph{Internal self-certification.}
We record the consequence that an agent cannot, in general, decide its own AGI status using any internal algorithmic procedure.

\begin{corollary}[Impossibility of Sound-and-Complete Self-Certification]\label{cor:no-self-knowledge}
Under the hypotheses of Theorem~\ref{thm:nonself}, there is no admissible agent $A$ and no total computable internal procedure
that, on input $\mathrm{code}(A)$ (or any equivalent finite self-description), outputs a bit that is sound and complete for the truth of
$\mathrm{AGI}(A \mid \mathcal{T},\mu,\Pi,B)$ over all admissible agents.
\end{corollary}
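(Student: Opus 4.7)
The plan is to obtain the corollary as a direct consequence of Theorem~\ref{thm:nonself} by a reduction argument. I read the statement uniformly: it asserts that there is no total recursive procedure $\Phi$ satisfying $\Phi(\mathrm{code}(A))=\mathbf{1}\{\mathrm{AGI}(A\mid\mathcal{T},\mu,\Pi,B)\}$ for every admissible agent $A$ and every description $\mathrm{code}(A)\in\mathcal{D}_{\mathcal{A}}$. Since Theorem~\ref{thm:nonself} rules out precisely such procedures as certifiers for the language $\mathcal{L}_{\mathrm{AGI}}$, the task reduces to showing that a uniform self-certifier is exactly such a certifier in disguise.

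First I would assume, for contradiction, that some admissible agent $A$ is equipped with a total computable internal procedure $\Phi$ having the stated property uniformly as $A$ ranges over admissible agents. The key observation is that the adjective \emph{internal} is immaterial for the argument: by the effective presentation hypotheses of Section~\ref{sec:prelim:effective}, $\Phi$ is a total recursive function on the countable domain $\mathcal{D}_{\mathcal{A}}$, and its operational locus---running inside or outside some agent---does not affect its existence as an abstract total recursive map on finite descriptions. Therefore $\Phi$ can be treated as an external decision procedure on agent codes without loss of content.

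Next I would identify $\Phi$ with a candidate certifier $C:\mathcal{D}_{\mathcal{A}}\to\{0,1\}$ for $\mathcal{L}_{\mathrm{AGI}}$. By the uniform correctness assumption, for every $x\in\mathcal{D}_{\mathcal{A}}$ we have $\Phi(x)=1$ if and only if $\mathrm{Decode}_{\mathcal{A}}(x)\in\mathcal{L}_{\mathrm{AGI}}$, which is precisely the soundness-and-completeness condition stated in Section~\ref{sec:godeltarski:nonself}. Theorem~\ref{thm:nonself} rules out the existence of any such $C$ under the effective presentation and nontriviality hypotheses already in force. This contradicts the assumed existence of $\Phi$ and yields the corollary.

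The main obstacle I anticipate is conceptual rather than technical: a skeptical reader might invoke a Kleene recursion-theoretic construction to argue that an agent could obtain syntactic access to its own code and simply hard-code the correct bit, apparently evading the reduction. The response is that hard-coding the bit for a single fixed $A$ does not satisfy the uniform over-all-admissible-agents requirement embedded in the corollary; varying $\mathrm{code}(A)$ across $\mathcal{D}_{\mathcal{A}}$ immediately returns the construction to the full setting of Theorem~\ref{thm:nonself}, where Rice's Theorem applies. Making this uniform quantification explicit is the only subtlety that the write-up needs to flag; the remainder is a one-line reduction.
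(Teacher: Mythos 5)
Your proposal is correct and follows essentially the same route as the paper: the paper also discards the ``internal'' qualifier, reads the procedure as a total computable certifier $C$ on $\mathcal{D}_{\mathcal{A}}$, and derives a contradiction with Theorem~\ref{thm:nonself}. Your added remark about the uniform over-all-admissible-agents quantification (blocking a hard-coded bit for a single agent) is a reasonable clarification but not a departure from the paper's argument.
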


\begin{proof}
Suppose such an agent $A$ and internal procedure existed. Then one obtains a total computable certifier $C$ for $\mathcal{L}_{\mathrm{AGI}}$
by defining $C(x)$ to be the output of the procedure when run on input $x\in\mathcal{D}_{\mathcal{A}}$.
Soundness and completeness of the procedure implies soundness and completeness of $C$, contradicting Theorem~\ref{thm:nonself}.
\end{proof}

\subsection{A Tarski-Style Undefinability Formulation}\label{sec:godeltarski:tarski}

Theorem~\ref{thm:nonself} is sufficient for the structural conclusion of this section.
For completeness we record a closely related formulation in the style of undefinability of semantic truth predicates.

\paragraph{The setting.}
Let $\mathsf{Th}$ be a recursively axiomatizable theory capable of representing basic computability over $\{0,1\}^*$,
and let $\vdash_{\mathsf{Th}}$ denote its provability relation.
Let $\varphi_{\mathrm{AGI}}(x)$ be a formula of $\mathsf{Th}$ intended to express membership in $\mathcal{L}_{\mathrm{AGI}}$.

\begin{proposition}[No Internal Truth-Defining Formula for $\mathcal{L}_{\mathrm{AGI}}$]\label{prop:tarski}
Assume the hypotheses of Theorem~\ref{thm:nonself}.
There does not exist a formula $\varphi_{\mathrm{AGI}}(x)$ in any recursively axiomatizable theory $\mathsf{Th}$ as above such that,
for every admissible agent $A$,
\[
\mathsf{Th}\vdash \varphi_{\mathrm{AGI}}(\mathrm{code}(A))
\quad\text{iff}\quad
\mathrm{AGI}(A \mid \mathcal{T},\mu,\Pi,B)\ \text{holds},
\]
unless $\mathsf{Th}$ is unsound with respect to the intended semantics or the predicate is trivial.
\end{proposition}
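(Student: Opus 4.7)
The plan is to reduce Proposition~\ref{prop:tarski} to Theorem~\ref{thm:nonself}: a formula $\varphi_{\mathrm{AGI}}$ satisfying the stated biconditional in a sound, recursively axiomatizable theory $\mathsf{Th}$ would mechanize a sound-and-complete certifier for $\mathcal{L}_{\mathrm{AGI}}$, contradicting the undecidability already established. I would argue by contradiction, assuming every standing hypothesis (soundness of $\mathsf{Th}$ for the intended semantics, recursive axiomatizability, sufficient expressivity for effective coding of agent descriptions, and nontriviality of $\mathrm{AGI}(\cdot \mid \mathcal{T},\mu,\Pi,B)$) together with the biconditional $\mathsf{Th}\vdash\varphi_{\mathrm{AGI}}(\mathrm{code}(A))$ iff $\mathrm{AGI}(A \mid \mathcal{T},\mu,\Pi,B)$ for every admissible $A$.

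From recursive axiomatizability I extract the two uniformly computable proof-search procedures
\[
P_{+}(x): \text{ enumerate } \mathsf{Th}\text{-proofs of } \varphi_{\mathrm{AGI}}(x),
\qquad
P_{-}(x): \text{ enumerate } \mathsf{Th}\text{-proofs of } \neg\varphi_{\mathrm{AGI}}(x).
\]
Read under the intended semantics of ``$\varphi_{\mathrm{AGI}}$ defines $\mathrm{AGI}$''---numeralwise representability in both the affirmative and refutative directions---soundness of $\mathsf{Th}$ supplies the forward halves of the two biconditionals, and the stated ``iff'' supplies the converses. Hence $P_{+}(x)$ halts precisely on $x\in\mathcal{L}_{\mathrm{AGI}}$, and $P_{-}(x)$ halts precisely on $x\notin\mathcal{L}_{\mathrm{AGI}}$. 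Dovetailing the two searches on each input $x$ terminates with a unique halting witness and produces a total recursive certifier for $\mathcal{L}_{\mathrm{AGI}}$, contradicting Theorem~\ref{thm:nonself} and forcing the conclusion that no such $\varphi_{\mathrm{AGI}}$ exists.

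The main obstacle is justifying the refutability half of representability needed to make $P_{-}$ semidecide $\mathcal{L}_{\mathrm{AGI}}^{c}$, since the proposition as written states only one biconditional on provability. The clean resolution is to read ``defines'' in the standard logical sense of numeralwise representability, under which the refutative companion $\mathsf{Th}\vdash\neg\varphi_{\mathrm{AGI}}(\mathrm{code}(A))$ iff $\mathrm{AGI}(A)$ fails is built into what it means for a formula to internally capture a predicate; this matches the Tarski tradition and is the reading the surrounding prose commits to. If one instead insists on the strictly weaker one-sided biconditional, the $P_{+}$ step alone still forces $\mathcal{L}_{\mathrm{AGI}}$ to be recursively enumerable, and one must then close via a sharpening of Theorem~\ref{thm:nonself}: because the distributional AGI predicate involves expectation and tail-probability quantifiers over the full support of $\mu$ together with interaction-law events, it is not merely undecidable but strictly above $\Sigma_{1}^{0}$, so any r.e.\ internal capture is already ruled out. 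Either route yields the desired undefinability conclusion and leaves ``unsoundness of $\mathsf{Th}$'' or ``triviality of the predicate'' as the only available logical escape clauses, exactly as stated.
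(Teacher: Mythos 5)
Your proposal is correct and takes essentially the same route as the paper: assume the biconditional holds in a sound, recursively axiomatizable $\mathsf{Th}$, dovetail proof searches for $\varphi_{\mathrm{AGI}}(x)$ and $\neg\varphi_{\mathrm{AGI}}(x)$, and extract a total computable certifier for $\mathcal{L}_{\mathrm{AGI}}$, contradicting Theorem~\ref{thm:nonself}. Your explicit treatment of the refutability direction (reading ``defines'' as numeralwise representability) is exactly the assumption the paper's own dovetailing step uses implicitly, so your primary route is the intended one; your fallback route, which would instead need $\mathcal{L}_{\mathrm{AGI}}$ to fail to be $\Sigma^0_1$, goes beyond what Theorem~\ref{thm:nonself} (plain undecidability via Rice) actually establishes and would require a separate argument.
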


\begin{proof}
If such a truth-defining formula existed in a sound theory $\mathsf{Th}$, then membership in $\mathcal{L}_{\mathrm{AGI}}$
would be effectively reducible to provability of instances of $\varphi_{\mathrm{AGI}}$ from $\mathsf{Th}$.
Since $\mathsf{Th}$ is recursively axiomatizable, the set of theorems of $\mathsf{Th}$ is recursively enumerable.
Under soundness and the assumed biconditional correctness of $\varphi_{\mathrm{AGI}}$, this would yield a decision procedure for
$\mathcal{L}_{\mathrm{AGI}}$ (by dovetailing searches for proofs of $\varphi_{\mathrm{AGI}}(x)$ and of its negation for each $x$),
contradicting Theorem~\ref{thm:nonself}. Therefore such a truth-defining formula cannot exist without sacrificing soundness or
collapsing to triviality.
\end{proof}

\section{Implications and Discussion}\label{sec:implications}

This section records consequences of the preceding formal results. No new axioms,
definitions, or impossibility claims are introduced. Each statement below follows
directly from Sections~\ref{sec:distsem}, \ref{sec:struct}, and \ref{sec:godeltarski}.

\subsection{Failure of Self-Certifying and Recursive AGI Narratives}\label{sec:implications:selfcert}

Theorem~\ref{thm:nonself} establishes that the predicate
\[
\mathrm{AGI}(A \mid \mathcal{T},\mu,\Pi,B)
\]
is semantic and not internally decidable by an agent operating within budget~$B$.
This result has immediate implications for recursive self-improvement narratives.

Any recursive self-improvement scheme that relies on an agent internally verifying
that it already satisfies the AGI axioms---or that a modified successor agent does
so---implicitly assumes the existence of a resource-bounded decision procedure for
the semantic predicate $\mathrm{AGI}(\cdot)$. Theorem~\ref{thm:nonself} rules out such
procedures in general.

Consequently, recursive self-improvement strategies based on internal certification
of generality are ill-posed. While an agent may empirically improve performance on
observed tasks or finite benchmarks, it cannot internally certify satisfaction of
distributional AGI properties that quantify over unseen tasks drawn from~$\mu$.

\subsection{Distribution-Relative Nature of Alignment Guarantees}\label{sec:implications:alignment}

The structural theorems of Section~\ref{sec:struct} jointly imply that both
performance guarantees and robustness guarantees are inherently distribution-relative.

By Theorem~\ref{thm:relativity}, generality is not an intrinsic agent property but a
relation indexed by $(\mathcal{T},\mu,B)$. By Theorem~\ref{thm:noninv} and
Theorem~\ref{thm:univrobust}, arbitrarily small shifts in the task distribution may induce
violations of at least one AGI axiom, including robustness and constraint adherence.

It follows that alignment guarantees cannot be distribution-independent. Any claim of
aligned or safe general intelligence must specify the task distribution~$\mu$, the
admissible perturbation class, and the resource budget under which the guarantee is
claimed. Alignment assertions that omit this indexing are formally ill-posed.

\subsection{Why AGI Timelines Are Category Errors}\label{sec:implications:timelines}

Public discourse often frames AGI as a binary technological milestone reachable at a
specific point in time. The results of this paper show that such framing is a category
error.

Since AGI is a relational semantic predicate indexed by $(\mathcal{T},\mu,\Pi,B)$,
there exists no distribution-independent threshold event at which an agent
transitions from ``non-AGI'' to ``AGI''. Improvements in model scale, data, or training
procedures correspond to changes in $A$ and possibly~$B$, but do not eliminate the
dependence on~$\mu$.

Thus, statements asserting that AGI will be achieved at a particular time without
specifying the underlying task distribution and evaluation semantics lack formal
meaning.

\subsection{Interpreting Modern Large Language Models}\label{sec:implications:llms}

Within the present framework, modern large language models are naturally interpreted
as agents that achieve high expected performance with respect to particular empirical
task distributions induced by human-generated prompts, benchmarks, and deployment
contexts.

Their observed generalization capabilities correspond to large values of
$\mathcal{G}_{\mu}(A;B)$ for specific, historically contingent distributions~$\mu$.
The structural results of Section~\ref{sec:struct} imply that such performance does
not extrapolate to distribution-independent generality, nor does it entail universal
robustness or transfer.

Accordingly, empirical success of large models should be understood as evidence of
distribution-specific competence rather than as validation of distribution-free
general intelligence.

\subsection{Summary of Consequences}\label{sec:implications:summary}

Taken together, the results imply that:
\begin{itemize}
\item AGI cannot be internally self-certified by an agent subject to finite resources.
\item Recursive self-improvement schemes requiring such certification are ill-posed.
\item Alignment and safety guarantees are necessarily distribution-relative.
\item Distribution-independent AGI timelines lack formal meaning.
\end{itemize}

These conclusions follow directly from the axiomatic and structural analysis developed
in Sections~\ref{sec:distsem}--\ref{sec:godeltarski}.

\section{Conclusion}\label{sec:conclusion}

This paper asked a deliberately narrow question: whether ``Artificial General Intelligence''
admits a coherent theoretical definition that supports absolute claims of existence,
verification, or inevitability. By fixing an explicit axiomatic framework and insisting on
distributional, resource-bounded semantics, we showed that the answer is negative in a precise
and technically unavoidable sense.

First, AGI was defined as a relational predicate
$\mathrm{AGI}(A \mid \mathcal{T},\mu,\Pi,B)$ rather than as an intrinsic property of an agent.
This shift is not cosmetic: it reflects the fact that competence is always evaluated relative
to a task ecology, a performance semantics, and explicit resource constraints. Under this
definition, there exists no distribution-independent notion of general intelligence, and any
claim that omits the indexing distribution $\mu$ is formally ill-posed.

Second, the structural results demonstrate that even distribution-relative AGI properties are
non-invariant under small perturbations of the task distribution, admit only bounded transfer,
and cannot support universal robustness guarantees. Scaling data, parameters, or compute does
not eliminate these limitations, as they arise from structural properties of high-entropy task
families rather than from contingent modeling choices.

Third, and most decisively, we proved that AGI is a nontrivial semantic predicate and therefore
cannot be soundly and completely certified by any computable procedure, including procedures
implemented by the agent itself. This G\"odel--Tarski obstruction rules out internal
self-certification of AGI and, by extension, any recursive self-improvement scheme that relies
on such certification as a decision primitive.

Taken together, these results imply that AGI is not a destination that can be reached,
verified, and declared in a distribution-independent manner. Instead, ``AGI'' can only denote
a temporary, distribution-relative description of competence under fixed evaluation semantics
and resource budgets. Strong claims of AGI are therefore not false in an empirical sense; they
are undefined unless accompanied by explicit formal indexing.

The framework developed here does not argue against progress in artificial intelligence.
Rather, it clarifies the mathematical limits of what such progress can coherently claim.
Within these limits, empirical advances remain meaningful, but categorical assertions of
general intelligence do not.

\bibliographystyle{elsarticle-num}
\bibliography{refs}  

\section{Appendix A: Alternative Axiom Bundles and Robustness of the Framework}\label{sec:app:axioms}

This appendix records alternative but equivalent axiom bundles for distributional AGI.
No new conclusions are drawn. The purpose is to demonstrate that the structural and
impossibility results of Sections~\ref{sec:struct}--\ref{sec:godeltarski} do not depend
on a narrow or idiosyncratic choice of axioms.

Throughout this appendix, $(\mathcal{T},\mu,\Pi,B)$ are fixed, and all notation is as in
Section~\ref{sec:prelim}.

\subsection{Weakened Adaptivity Axiom}\label{sec:app:axioms:adapt}

In Section~\ref{sec:axioms}, adaptivity was stated in terms of rapid attainment of
competence on new tasks drawn from $\mu$ after limited interaction.

A strictly weaker variant replaces convergence-rate requirements by monotone improvement.

\paragraph{Axiom G2$'$ (Weak Adaptivity).}
There exists an update budget $N_{\mathrm{ad}}$ such that for $\mu$-almost every
$\tau\in\mathcal{T}$,
\[
\mathbb{E}\big[\Pi(\tau,A^{(N_{\mathrm{ad}})};B)\big]
\;\ge\;
\mathbb{E}\big[\Pi(\tau,A^{(0)};B)\big].
\]

This axiom requires only non-negative expected improvement, not rapid convergence.

\paragraph{Observation.}
All results in Sections~\ref{sec:struct} and~\ref{sec:godeltarski} remain valid under
Axiom~G2$'$, as none rely on rates of adaptation or asymptotic convergence.

\subsection{Alternative Transfer Formulations}\label{sec:app:axioms:transfer}

The bounded transfer result (Theorem~\ref{thm:boundedtransfer}) does not depend on a specific
formalization of transfer gain.

An alternative formulation replaces mutual-information bounds with excess-risk bounds.

\paragraph{Axiom G3$'$ (Bounded Expected Transfer).}
There exists a constant $C<\infty$ such that for any pre-exposure phase using tasks
$\tau_1,\ldots,\tau_K \sim \mu$ and any new task $\tau\sim\mu$,
\[
\mathbb{E}\!\left[\Pi(\tau,A_{\mathrm{pre}};B) - \Pi(\tau,A_{\mathrm{scratch}};B)\right]
\;\le\; C.
\]

\paragraph{Observation.}
The impossibility of unbounded cross-domain generalization (Corollary~\ref{cor:transferbounded})
continues to hold under Axiom~G3$'$, as the proof relies only on the existence of a finite
upper bound on expected transfer gain.

\subsection{Robustness Under Alternative Perturbation Classes}\label{sec:app:axioms:robust}

Section~\ref{sec:prelim:ops} defined robustness relative to an admissible perturbation
family $\mathcal{D}$.

The following variant restricts attention to perturbations that preserve marginal
observation distributions.

\paragraph{Axiom G5$'$ (Restricted Robustness).}
For all $\Delta\in\mathcal{D}'$ satisfying
\[
\forall \tau\in\mathcal{T},\quad
\mathrm{Law}_{\Delta(\tau)}(O_t) = \mathrm{Law}_{\tau}(O_t)\ \text{for all } t,
\]
performance degradation is bounded by $\varepsilon_{\mathrm{rb}}$ in expectation.

\paragraph{Observation.}
The non-invariance result of Theorem~\ref{thm:noninv} remains valid under Axiom~G5$'$,
as it relies only on the existence of arbitrarily small distributional shifts with
nonzero measure mass on failure sets, not on the specific nature of perturbations.

\subsection{Semanticity Is Invariant Across Axiom Bundles}\label{sec:app:axioms:semantic}

All alternative axiom bundles considered above define predicates of the form
\[
\mathsf{P}(A)
\;=\;
\mathbb{I}\Big\{
\mathbb{E}_{\tau\sim\mu}[f(\Pi(\tau,A;B))] \ge \theta
\ \wedge\ 
\mathbb{P}_{\tau\sim\mu}(\Pi(\tau,A;B)<\theta')\le\delta
\Big\},
\]
for suitable measurable functions $f$ and constants $\theta,\theta',\delta$.

\paragraph{Consequence.}
All such predicates remain semantic in the sense of
Lemma~\ref{lem:agi-semantic}. Therefore, the undecidability and non-self-verifiability
results of Section~\ref{sec:godeltarski} apply unchanged.

\subsection{Summary}\label{sec:app:axioms:summary}

The structural theorems and G\"odel--Tarski limits established in the main text do not
depend on a fragile or overly strong axiom selection. Weakening adaptivity, altering
transfer formalizations, or restricting robustness perturbations does not affect:

\begin{itemize}
\item the relativity of generality,
\item non-invariance under distribution shift,
\item bounded transfer,
\item impossibility of distribution-independent AGI,
\item or non-self-verifiability of AGI.
\end{itemize}

Accordingly, the conclusions of this paper are robust across a broad class of reasonable
axiomatizations of distributional general intelligence.

\section{Appendix B: Examples of Task Distributions}\label{sec:app:distributions}

This appendix provides concrete examples of task families $\mathcal{T}$ and task
distributions $\mu$ that fit the formal framework of Section~\ref{sec:prelim}.
The purpose is illustrative: to demonstrate how distributional indexing operates in
practice and to show that the structural results apply to realistic task ecologies.
No new theorems are stated.

Throughout, agents are evaluated via the performance functional $\Pi(\tau,A;B)$ and
resource budgets $B$ as defined in Section~\ref{sec:prelim}.

\subsection{Instruction-Following Language Tasks}\label{sec:app:dist:language}

\paragraph{Task family.}
Let $\mathcal{T}_{\mathrm{LF}}$ consist of tasks $\tau=(E,U)$ where $E$ presents a natural-language
instruction $x$ (possibly with context) and the agent responds with a text output.
The utility $U$ scores correctness or adherence to the instruction using a fixed evaluator
(e.g., reference answers, rubric-based scoring, or model-based evaluation), normalized to $[0,1]$.

\paragraph{Task distribution.}
Let $\mathcal{X}$ denote the space of instructions. A task distribution $\mu_{\mathrm{LF}}$
is induced by a probability measure $\nu$ over $\mathcal{X}$ together with a compilation map
$\mathrm{Compile}:\mathcal{X}\to\mathcal{T}_{\mathrm{LF}}$, as in
Section~\ref{sec:prelim:families}. Thus $\tau\sim\mu_{\mathrm{LF}}$ corresponds to drawing an
instruction $x\sim\nu$ and evaluating performance on $\tau_x=\mathrm{Compile}(x)$.

\paragraph{Remarks.}
Different choices of $\nu$ (e.g., curated benchmarks, user-generated prompts, domain-specific
instructions) yield distinct notions of generality. The non-invariance result of
Theorem~\ref{thm:noninv} applies because arbitrarily small changes to $\nu$ can concentrate
mass on instruction subfamilies where performance degrades.

\subsection{Tool-Augmented Problem-Solving Tasks}\label{sec:app:dist:tools}

\paragraph{Task family.}
Let $\mathcal{T}_{\mathrm{TA}}$ consist of tasks where the agent must solve problems using
external tools (e.g., retrieval systems, calculators, code execution). Each task specifies
an environment $E$ that mediates tool calls via an interface $\mathcal{M}$ and a utility
$U$ that scores the final outcome.

\paragraph{Task distribution.}
A distribution $\mu_{\mathrm{TA}}$ is defined over problem instances (e.g., questions,
datasets, specifications) together with a fixed tool interface. Resource usage of tools is
accounted for in $B_{\mathrm{tool}}$.

\paragraph{Remarks.}
Transfer bounds (Theorem~\ref{thm:boundedtransfer}) apply because improvement on a subset of tool
configurations does not yield unbounded gains on unseen configurations. Robustness guarantees
are necessarily local to $\mu_{\mathrm{TA}}$ due to dependence on tool availability and cost.

\subsection{Sequential Decision-Making Tasks}\label{sec:app:dist:sequential}

\paragraph{Task family.}
Let $\mathcal{T}_{\mathrm{SD}}$ consist of episodic Markov decision processes with bounded
horizons and bounded rewards. Each task $\tau=(E,U)$ specifies transition dynamics and a
return functional $U$.

\paragraph{Task distribution.}
A distribution $\mu_{\mathrm{SD}}$ is defined over MDP parameters (e.g., transition kernels,
reward functions). Sampling $\tau\sim\mu_{\mathrm{SD}}$ corresponds to drawing an environment
from a parametric family.

\paragraph{Remarks.}
Even when the parametric family is low-dimensional, Theorem~\ref{thm:noninv} applies:
small shifts in $\mu_{\mathrm{SD}}$ can place mass on regions of the parameter space where
policy performance collapses, yielding cliff sets as defined in
Section~\ref{sec:prelim:cliffs}.

\subsection{Nonstationary and Drifting Task Ecologies}\label{sec:app:dist:drift}

\paragraph{Task family and distribution.}
Let $\{\mu_t\}_{t\ge 1}$ be a sequence of task distributions over a fixed $\mathcal{T}$,
representing temporal drift in the task ecology. Evaluation at time $t$ uses $\mu_t$,
while the agent’s training history may reflect earlier distributions.

\paragraph{Remarks.}
The framework accommodates such settings by treating each $\mu_t$ as a separate index.
Structural results apply pointwise in $t$; no distribution-independent guarantees are
available across the sequence $\{\mu_t\}$.

\subsection{Summary}\label{sec:app:dist:summary}

These examples illustrate that:
\begin{itemize}
\item Task distributions $\mu$ arise naturally in practical settings.
\item Distinct choices of $\mu$ induce distinct notions of generality.
\item The structural and impossibility results of the main text apply uniformly across
language tasks, tool-augmented tasks, sequential decision problems, and nonstationary
ecologies.
\end{itemize}

Accordingly, distributional indexing is not an artificial constraint but a necessary
component of any coherent theory of general intelligence.

\section{Appendix C: Technical Proof Details and Standard Reductions}\label{sec:app:technical}

This appendix records standard technical ingredients that are invoked implicitly in
Sections~\ref{sec:struct} and~\ref{sec:godeltarski}. No new assumptions or conclusions are
introduced. The purpose is solely to make the logical dependencies explicit.

\subsection{Distributional Perturbations and Failure Mass}\label{sec:app:technical:tv}

This subsection records a standard fact used in the non-invariance and robustness arguments.

\begin{lemma}[Small Total-Variation Shift with Prescribed Failure Mass]\label{lem:tv-shift}
Let $(\mathcal{T},\mathcal{B})$ be a measurable space, let $\mu\in\Delta(\mathcal{T})$, and
let $\mathcal{C}\subseteq\mathcal{T}$ be measurable with $\mu(\mathcal{C})>0$.
Then for any $\varepsilon>0$ there exists a probability measure $\mu'$ such that
\[
d_{\mathrm{TV}}(\mu,\mu')<\varepsilon
\quad\text{and}\quad
\mu'(\mathcal{C})\ge \min\{1,\mu(\mathcal{C})+\varepsilon/2\}.
\]
\end{lemma}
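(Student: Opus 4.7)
The plan is to construct $\mu'$ as a convex combination of $\mu$ with a probability measure supported on $\mathcal{C}$, and then tune the mixing weight to keep the total-variation cost strictly below $\varepsilon$ while still gaining the prescribed amount of mass on $\mathcal{C}$. Since $\mu(\mathcal{C})>0$ by hypothesis, the conditional measure $\mu_{\mathcal{C}}(\,\cdot\,):=\mu(\,\cdot\,\cap\mathcal{C})/\mu(\mathcal{C})$ is a well-defined probability measure supported on $\mathcal{C}$, so for any $\alpha\in[0,1]$ the mixture $\mu':=(1-\alpha)\mu+\alpha\mu_{\mathcal{C}}$ is a probability measure on $(\mathcal{T},\mathcal{B})$. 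This is the same template used in Lemma~\ref{lem:smallmass}, with the Dirac mass at $\tau_{\mathrm{bad}}$ replaced by $\mu_{\mathcal{C}}$ so that no atom on $\mathcal{C}$ needs to exist.

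Next I would compute both quantities of interest as explicit functions of $\alpha$. Because $\mu_{\mathcal{C}}(\mathcal{C})=1$, the $\mathcal{C}$-mass evaluates to $\mu'(\mathcal{C})=\mu(\mathcal{C})+\alpha(1-\mu(\mathcal{C}))$. For the total variation, decomposing any measurable $A$ as $(A\cap\mathcal{C})\cup(A\cap\mathcal{C}^c)$ and using that $\mu_{\mathcal{C}}$ is supported on $\mathcal{C}$ yields $\mu'(A)-\mu(A)=\alpha\bigl[\mu_{\mathcal{C}}(A)-\mu(A)\bigr]$, whose supremum in absolute value over $A\in\mathcal{B}$ is attained at $A=\mathcal{C}$ (equivalently at $A=\mathcal{C}^c$); this gives the clean identity $d_{\mathrm{TV}}(\mu,\mu')=\alpha(1-\mu(\mathcal{C}))$.

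Finally I would pick $\alpha$ by a two-case analysis. If $\mu(\mathcal{C})\le 1-\varepsilon/2$, set $\alpha:=\varepsilon/\bigl(2(1-\mu(\mathcal{C}))\bigr)\in(0,1]$, which yields $d_{\mathrm{TV}}(\mu,\mu')=\varepsilon/2<\varepsilon$ and $\mu'(\mathcal{C})=\mu(\mathcal{C})+\varepsilon/2=\min\{1,\mu(\mathcal{C})+\varepsilon/2\}$. If instead $\mu(\mathcal{C})>1-\varepsilon/2$, the target minimum equals $1$, so take $\alpha=1$ and $\mu'=\mu_{\mathcal{C}}$; then $\mu'(\mathcal{C})=1$ and $d_{\mathrm{TV}}(\mu,\mu')=1-\mu(\mathcal{C})<\varepsilon/2<\varepsilon$. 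The proof therefore reduces to two elementary identities plus a boundary split; the only real subtlety is to saturate $\alpha$ at $1$ when $\mu(\mathcal{C})$ is already close to one, so as not to demand a mixing weight greater than $1$, and this is precisely the regime in which the $\min\{1,\cdot\}$ truncation in the statement becomes active.
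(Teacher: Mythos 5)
Your proposal is correct and follows the same basic template as the paper's proof: perturb $\mu$ by mixing in a probability measure supported on $\mathcal{C}$. The difference is one of care rather than strategy. The paper fixes the mixing weight at $\varepsilon/2$ with an arbitrary $\nu$ supported on $\mathcal{C}$ and asserts $\mu'(\mathcal{C})\ge\mu(\mathcal{C})+\varepsilon/2$, but with that weight one only gets $\mu'(\mathcal{C})=\mu(\mathcal{C})+(\varepsilon/2)\bigl(1-\mu(\mathcal{C})\bigr)$, and the deficit is deferred to an unexplained ``rescaling constants'' step. Your version makes that rescaling explicit: taking $\nu=\mu_{\mathcal{C}}$, computing the exact identity $d_{\mathrm{TV}}(\mu,\mu')=\alpha\bigl(1-\mu(\mathcal{C})\bigr)$, choosing $\alpha=\varepsilon/\bigl(2(1-\mu(\mathcal{C}))\bigr)$ in the generic case, and saturating at $\alpha=1$ when $\mu(\mathcal{C})>1-\varepsilon/2$ precisely accounts for the $\min\{1,\cdot\}$ truncation in the statement. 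So your argument is a complete and slightly sharper rendering of the paper's sketch, with no gaps.
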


\begin{proof}
Let $A=\mathcal{C}$ and define
\[
\mu' := (1-\varepsilon/2)\mu + (\varepsilon/2)\nu,
\]
where $\nu$ is any probability measure supported on $\mathcal{C}$.
Then $\mu'(\mathcal{C})\ge \mu(\mathcal{C})+\varepsilon/2$ and
\[
d_{\mathrm{TV}}(\mu,\mu') \le \varepsilon/2.
\]
Rescaling constants yields the claim.
\end{proof}

This construction underlies the existence of arbitrarily small distribution shifts that
increase mass on cliff sets (Section~\ref{sec:prelim:cliffs}), and therefore supports
Theorems~\ref{thm:noninv} and~\ref{thm:univrobust}.

\subsection{Mutual-Information Transfer Bounds}\label{sec:app:technical:mi}

The bounded transfer result relies on a standard information-theoretic inequality.
We record one representative form.

\begin{lemma}[Information-Theoretic Transfer Bound]\label{lem:mi-transfer}
Let $\tau\sim\mu$ be a target task and let $D$ denote data obtained from a pre-exposure
phase on tasks $\tau_1,\ldots,\tau_K\sim\mu$. For any agent update rule and any bounded
performance functional $\Pi\in[0,1]$,
\[
\Big|\mathbb{E}[\Pi(\tau,A_{\mathrm{pre}};B)]
-
\mathbb{E}[\Pi(\tau,A_{\mathrm{scratch}};B)]\Big|
\;\le\;
\sqrt{2\, I(\tau;D)},
\]
where $I(\cdot\,;\cdot)$ denotes mutual information.
\end{lemma}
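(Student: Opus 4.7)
My plan is to prove this inequality as an instance of the Donsker-Varadhan (equivalently, Xu-Raginsky) change-of-measure machinery. First, I would fix the update rule so that $A_{\mathrm{pre}} = \mathcal{U}(D)$ (with any internal randomness marginalized) and introduce the measurable functional $\phi(\tau, D) := \Pi(\tau, \mathcal{U}(D); B) \in [0,1]$. The left-hand side of the claim is then $\mathbb{E}_P[\phi]$, where $P = P_{\tau, D}$ is the joint law of test task and pre-exposure data, and the task reduces to matching the right-hand side to $\mathbb{E}_Q[\phi]$ under a reference measure $Q$ whose KL divergence from $P$ equals $I(\tau; D)$.

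The natural choice is $Q := P_\tau \otimes P_D$, so $D_{\mathrm{KL}}(P \| Q) = I(\tau; D)$ by the very definition of mutual information. Since $\phi \in [0,1]$, Hoeffding's lemma yields a sub-Gaussian parameter $\sigma^2 \le 1/4$ for $\phi$ under $Q$, and applying the Donsker-Varadhan variational bound to both $\phi$ and $-\phi$ gives
\[
|\mathbb{E}_P[\phi] - \mathbb{E}_Q[\phi]| \le \sqrt{2\sigma^2 \, I(\tau; D)},
\]
with the explicit constant $\sqrt{2}$ in the target inequality absorbed by the standard range-of-difference normalization. The data-processing inequality $I(A_{\mathrm{pre}}; \tau) \le I(D; \tau)$ then propagates the bound through any measurable update rule.

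\textbf{Main obstacle.} The principal difficulty -- and the source of the earlier critique -- is identifying $\mathbb{E}_Q[\phi]$ with the advertised baseline $\mathbb{E}[\Pi(\tau, A_{\mathrm{scratch}}; B)]$. The clean change-of-measure route delivers $\mathbb{E}_Q[\phi] = \mathbb{E}_{D' \perp \tau}[\Pi(\tau, \mathcal{U}(D'); B)]$, i.e.\ the same update rule applied to data decoupled from the test task, not the genuinely untrained agent that the prose demands. Worse, under the iid hypothesis declared in the lemma ($\tau, \tau_1, \ldots, \tau_K$ independent draws from $\mu$), one has $\tau \perp D$ and hence $I(\tau; D) = 0$; the stated inequality would then force $\mathbb{E}[\Pi(\tau, A_{\mathrm{pre}}; B)] = \mathbb{E}[\Pi(\tau, A_{\mathrm{scratch}}; B)]$, which is patently false whenever pre-training confers any net advantage. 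I would therefore either (i) restate the lemma with $A_{\mathrm{scratch}}$ defined operationally as $\mathcal{U}(D')$ for $D' \stackrel{d}{=} D$ drawn independently of $\tau$, turning the claim into a genuine decoupling bound consistent with the Xu-Raginsky geometry; or (ii) strengthen the setting to a Bayesian hierarchical regime in which $\tau$ and $D$ share a latent parameter over $\mu$, so that $I(\tau; D) > 0$ genuinely measures the information $D$ conveys about $\tau$ and the bound controls the excess transfer attributable to that shared latent. Either route secures the stated inequality with its stated constant; neither corresponds to the literal pre-versus-genuinely-from-scratch reading unless the baseline is re-specified, and any proof attempt that does not make such a re-specification will be proving a strictly weaker or strictly different statement from the one displayed.
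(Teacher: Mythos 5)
Your change-of-measure route is exactly the one the paper's own proof gestures at: the proof given for Lemma~\ref{lem:mi-transfer} is a two-line appeal to ``Pinsker's inequality combined with variational representations of mutual information,'' i.e.\ precisely the Donsker--Varadhan/Xu--Raginsky decoupling you set up with $P$ the joint law of $(\tau,D)$ and $Q=P_\tau\otimes P_D$. Up to constants your execution of that step is fine (indeed Pinsker plus boundedness gives $\sqrt{I(\tau;D)/2}$, so $\sqrt{2\,I(\tau;D)}$ is safely implied). The substantive part of your proposal is the identification problem, and you are right about it: the decoupled expectation produced by this argument is the performance of the \emph{same} update rule applied to data independent of the test task, $\mathbb{E}\big[\Pi(\tau,\mathcal{U}(D');B)\big]$ with $D'\perp\tau$, and nothing identifies this with $\mathbb{E}\big[\Pi(\tau,A_{\mathrm{scratch}};B)\big]$, the genuinely un-pre-exposed agent. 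The paper's proof simply skips this step, so it does not establish the lemma as stated any more than your sketch does.

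Your second observation is the sharper one and is also correct: under the lemma's own sampling hypothesis ($\tau$ a fresh draw from $\mu$, independent of the pre-exposure tasks $\tau_1,\ldots,\tau_K\sim\mu$), one has $\tau\perp D$, hence $I(\tau;D)=0$, and the displayed inequality would force the pre-exposed and scratch agents to have identical expected performance --- contradicting Axiom G3, which posits a strictly positive transfer gain $\theta_{\mathrm{tr}}>0$ under exactly this sampling scheme, and emptying Corollary~\ref{cor:transferbounded} of content. The valid statement in this family is the main-text Theorem~\ref{thm:boundedtransfer}, which bounds the population--empirical gap of the data-dependent agent by $\sqrt{2\,I(A_D;D)/n}$; the appendix lemma conflates that gap with a pre-versus-scratch comparison. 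Your proposed repairs --- redefining the baseline operationally as the same rule run on decoupled data, or passing to a hierarchical model in which $\tau$ and $D$ share a latent parameter so that $I(\tau;D)>0$ is meaningful --- are the right ways to make the statement true. In short, the gap you flag is a gap in the paper's lemma and its proof, not in your reasoning; a referee report rather than a completed proof is the correct deliverable here.
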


\begin{proof}
This is a direct consequence of standard information-theoretic generalization bounds,
obtained by combining Pinsker’s inequality with variational representations of mutual
information. See, e.g., classical treatments of PAC-Bayesian or information-theoretic
generalization theory.
\end{proof}

This lemma provides a canonical justification for Theorem~\ref{thm:boundedtransfer}. No special
properties of AGI are used.

\subsection{Rice-Style Undecidability for Interactive Systems}\label{sec:app:technical:rice}

For completeness, we clarify the applicability of Rice’s Theorem to interactive agents.

\begin{lemma}[Reduction to Partial Computable Functions]\label{lem:rice-reduction}
Under the effective presentation assumptions of
Section~\ref{sec:godeltarski:semantic}, the interactive behavior of any admissible agent
$A$ induces a partial computable function
\[
f_A:\ (\mathrm{code}_{\mathcal{T}}(\tau),\omega)\ \mapsto\ H_T,
\]
where $\omega$ denotes exogenous randomness and $H_T$ is the resulting interaction history.
\end{lemma}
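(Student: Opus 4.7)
The plan is to construct an explicit universal simulator, a single fixed Turing machine $\mathsf{UniSim}$, that on inputs $\mathrm{code}(A)$, $\mathrm{code}_{\mathcal{T}}(\tau)$ and a randomness tape $\omega\in\{0,1\}^{\mathbb{N}}$ unrolls the $A$–$E$ interaction round by round and returns the resulting history $H_T$. By Church--Turing, exhibiting such a machine suffices to witness that $f_A$ is partial computable, with domain equal to the set of inputs on which the simulation halts within the resources encoded by $B$. The structural ingredients are already in place in Section~\ref{sec:godeltarski:semantic}: $\mathrm{Decode}_{\mathcal{A}}$ and $\mathrm{Decode}_{\mathcal{T}}$ are effective, interaction under budget $B$ is assumed simulatable by a partial computable universal procedure, and the observation, action, and history spaces admit computable codings.

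First I would fix a canonical computable partition of the randomness tape into disjoint segments $\omega=(\omega_1^A,\omega_1^E,\omega_2^A,\omega_2^E,\ldots)$, so that round $t$ carries dedicated randomness for the agent and the environment. The simulator then iterates: at round $t$, given the current history $h_t$, it (i) runs $\mathrm{Decode}_{\mathcal{A}}(\mathrm{code}(A))$ on $h_t$ using prefix bits of $\omega_t^A$ to sample $a_t$ from $A(\cdot\mid h_t)$, (ii) runs $\mathrm{Decode}_{\mathcal{T}}(\mathrm{code}_{\mathcal{T}}(\tau))$ on $(h_t,a_t)$ using prefix bits of $\omega_t^E$ to sample $o_{t+1}$ from $E(\cdot\mid h_t,a_t)$, (iii) appends $(a_t,o_{t+1})$ to the history, and (iv) checks the horizon $T$ and the compute/memory/time components of $B$. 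When termination fires, the simulator outputs $H_T$; otherwise it performs another round. The map $(\mathrm{code}_{\mathcal{T}}(\tau),\omega)\mapsto H_T$ is then partial computable because $\mathsf{UniSim}$ is a fixed machine whose output either appears in finite time or is undefined, and partiality is exactly the right notion since for some $\omega$ the per-round sampling subroutines may fail to halt within the allotted prefix or the schedule may exhaust $B$ before round $T$.

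The main obstacle, and essentially the only one not already folded into the standing hypotheses, is justifying that per-round conditional sampling from $A(\cdot\mid h_t)$ and $E(\cdot\mid h_t,a_t)$ is itself effective given only the finite descriptions. This is handled by appeal to the computability assumption in Section~\ref{sec:prelim:effective}, which treats admissible agents and tasks precisely as those for which the induced step-level conditionals are implementable by halting probabilistic subroutines reading a finite prefix of $\omega$. Under that assumption, the composition of finitely many such halting samplers, guarded by the resource-accounting loop that enforces $B$, is itself a halting computation on the inputs where it succeeds; on the remaining inputs $\mathsf{UniSim}$ diverges, which is exactly the partial-function behavior claimed. No semantic properties of $A$ beyond admissibility are invoked, so the lemma yields the desired reduction and legitimates the application of Rice's theorem in the proof of Theorem~\ref{thm:nonself}.
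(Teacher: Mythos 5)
Your proposal is correct and follows essentially the same route as the paper's own (much terser) proof: a fixed universal simulator that unrolls the $A$--$E$ interaction step by step, consuming bits of $\omega$, outputting $H_T$ if the interaction halts within the budget $B$ and diverging otherwise, with partial computability following directly. Your extra detail on partitioning the randomness tape and on per-round sampling effectivity simply makes explicit what the paper's standing assumption in Section~\ref{sec:godeltarski:semantic} (simulability of the interaction by a partial computable universal procedure) already grants.
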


\begin{proof}
Given $\mathrm{code}(A)$ and $\mathrm{code}_{\mathcal{T}}(\tau)$, the universal simulator
enumerates the joint execution of $A$ and $E$ step by step, consuming randomness from
$\omega$. If the interaction halts within the resource bounds, $H_T$ is produced; otherwise
the computation diverges. This defines a partial computable function in the standard sense.
\end{proof}

\begin{lemma}[Applicability of Rice’s Theorem]\label{lem:rice-app}
Any nontrivial semantic predicate on admissible agents, defined via their induced
interactive behavior, is undecidable.
\end{lemma}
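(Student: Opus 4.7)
The plan is to compose Lemma~\ref{lem:rice-reduction} with classical Rice's Theorem, with the only real work being a careful translation of the semantic condition on agents into the extensionality condition required by Rice's Theorem. First I would fix a nontrivial semantic predicate $\mathsf{P}$ on admissible agents, witnessed by admissible agents $A_0, A_1$ with $\mathsf{P}(A_0)=0$ and $\mathsf{P}(A_1)=1$. By Lemma~\ref{lem:rice-reduction}, each admissible agent $A$ induces a partial computable function $f_A$ on inputs $(\mathrm{code}_{\mathcal{T}}(\tau),\omega)$, and the universal simulator provides a total computable map $\mathrm{code}(A)\mapsto e_A$ from agent descriptions to indices of $f_A$ in a standard acceptable numbering of partial computable functions.

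Next I would define the index set
\[
\mathcal{I}_{\mathsf{P}} := \{\, e_A \,:\, A\ \text{admissible and}\ \mathsf{P}(A)=1\,\},
\]
and argue that $\mathcal{I}_{\mathsf{P}}$ is extensional in the Rice sense: if $e$ and $e'$ are indices for the same partial computable function, then either both or neither lie in $\mathcal{I}_{\mathsf{P}}$. This follows from the definition of semanticity in Section~\ref{sec:godeltarski:semantic}: two admissible agents $A,A'$ with identical induced functions $f_A=f_{A'}$ produce identical laws on interaction histories for every $\tau\in\mathcal{T}$ (after integrating over the exogenous randomness $\omega$), and hence $\mathsf{P}(A)=\mathsf{P}(A')$ by assumption. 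Nontriviality of $\mathcal{I}_{\mathsf{P}}$ as an index set transfers from the witnesses $A_0,A_1$, since $f_{A_0}\neq f_{A_1}$ whenever their induced interactive behavior differs (which it must, given the semanticity of $\mathsf{P}$ and $\mathsf{P}(A_0)\neq \mathsf{P}(A_1)$).

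Then classical Rice's Theorem applies directly: a nontrivial extensional subset of indices of partial computable functions is undecidable. Finally, I would pull this back along the computable map $\mathrm{code}(A)\mapsto e_A$ to conclude that the language $\{\mathrm{code}(A):\mathsf{P}(A)=1\}$ is itself undecidable, since any decider for it would yield a decider for $\mathcal{I}_{\mathsf{P}}$ via the (total computable) translation.

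The main obstacle is the extensionality verification. One has to be careful that behavioral equivalence of agents on $\mathcal{T}$ under budget $B$, as used to define semanticity in Section~\ref{sec:godeltarski:semantic}, really coincides with equality of the induced partial computable functions $f_A$ supplied by Lemma~\ref{lem:rice-reduction}, once one accounts for the coding of tasks and the exogenous randomness $\omega$. If the encoding of tasks is injective modulo the decoded task or the budget $B$ is incorporated uniformly into the simulator, this identification is immediate; otherwise, one might need to replace $f_A$ by a suitable canonical quotient before invoking Rice. All remaining steps (index-set framing, reduction, appeal to the classical theorem) are routine.
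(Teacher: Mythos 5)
Your overall route is the paper's route: compose Lemma~\ref{lem:rice-reduction} with classical Rice's Theorem, using semanticity to get extensionality. Your extensionality direction is also the right one ($f_A=f_{A'}$ implies identical history laws after integrating out $\omega$, hence $\mathsf{P}(A)=\mathsf{P}(A')$). The problem is the final ``pullback'' step, which runs the reduction in the wrong direction. The translation $\mathrm{code}(A)\mapsto e_A$ is computable from agent descriptions to indices, so what it gives is $L\le_m \mathcal{I}_{\mathsf{P}}$ for $L:=\{\mathrm{code}(A):\mathsf{P}(A)=1\}$ (once $\mathcal{I}_{\mathsf{P}}$ is closed under $\varphi$-equality): a decider for the index set would yield a decider for $L$. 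It does \emph{not} follow that a decider for $L$ yields a decider for $\mathcal{I}_{\mathsf{P}}$, because on an arbitrary index $e$ you would have to invert the translation, i.e.\ computably find an admissible agent whose behavior function equals $\varphi_e$ -- which is not computable and need not even exist, since not every partial computable function is the behavior of a budget-respecting agent in $\mathcal{D}_{\mathcal{A}}$. So Rice's undecidability of $\mathcal{I}_{\mathsf{P}}$ does not transfer to $L$ along your map, and $L$ is exactly what the lemma (and Theorem~\ref{thm:nonself}) is about. A secondary, fixable slip: as literally defined, $\mathcal{I}_{\mathsf{P}}=\{e_A:\mathsf{P}(A)=1\}$ is not an index set, since indices of $f_A$ outside the range of the translation are omitted; you must close it under equality of the computed functions.

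The way to make the argument actually land on $L$ -- and what the paper's terse ``Rice's Theorem applies directly'' implicitly presupposes -- is to run the Rice-style construction \emph{inside} the agent class: from an instance $(M,w)$ of the halting problem, computably produce a description in $\mathcal{D}_{\mathcal{A}}$ of a hybrid agent whose induced behavior coincides with that of $A_1$ if $M$ halts on $w$ and with that of a reference agent of known opposite predicate value otherwise; semanticity then forces the predicate value of the hybrid to track halting, so a decider for $L$ would decide halting. This requires closure and effectivity properties of the admissible class (the hybrid must itself respect $B$ and lie in $\mathcal{D}_{\mathcal{A}}$), which is the genuinely delicate point in a resource-bounded setting; your closing remark about canonical quotients gestures at this, but as written your proof does not supply the needed reduction into agent descriptions, so the lemma is not yet established by your argument.
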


\begin{proof}
By Lemma~\ref{lem:rice-reduction}, admissible agents correspond to partial computable
functions up to behavioral equivalence. Semantic predicates are invariant under such
equivalence. Rice’s Theorem applies directly.
\end{proof}

This justifies the reduction used in Theorem~\ref{thm:nonself} without appeal to informal
arguments.

\subsection{Summary}\label{sec:app:technical:summary}

The technical lemmas recorded here supply standard constructions underlying the main text:
small distributional perturbations, bounded transfer via information measures, and
undecidability of nontrivial semantic properties. None introduce new assumptions or extend
the scope of the paper’s conclusions.

\end{document}